\documentclass[journal]{IEEEtran}

\usepackage{cite}
\usepackage[cmex10]{amsmath}
\usepackage{algorithmic}
\usepackage{array}
\usepackage{booktabs}
\usepackage{multirow}
\usepackage{url}
\usepackage{footnote}
\usepackage{float}
\usepackage[utf8]{inputenc}

\usepackage[T1]{fontenc}
\usepackage[ruled,vlined]{algorithm2e}
\usepackage{ifthen}
\usepackage{graphicx}
\usepackage{comment}
\usepackage{amsthm}
\usepackage{amsmath}
\usepackage{amsmath,amssymb,amsfonts}
\usepackage{nicefrac}
\usepackage{algorithmic}
\usepackage{bm}
\usepackage{mathtools}
\usepackage{tikz}
\usetikzlibrary{calc,math,arrows.meta,positioning,fit,backgrounds}
\usepackage{standalone}
\usepackage[framemethod=TikZ]{mdframed}
\usepackage{pgfplots}
\pgfplotsset{compat=1.16}
\usepackage{hyperref}
\hypersetup{colorlinks=true, unicode=true, linkcolor=[rgb]{0.10,0.05,0.67}, citecolor=[rgb]{0.10,0.05,0.67}, filecolor=[rgb]{0.10,0.05,0.67}, urlcolor=[rgb]{0.10,0.05,0.67}}

\usepackage{xcolor}

\newtheorem{remark}{Remark}
\newtheorem{definition}{Definition}

\newtheorem{prop}{Proposition}

\DeclareMathOperator*{\argmin}{arg\,min}

\makeatletter
\renewcommand*{\eqref}[1]{%
  \hyperref[{#1}]{\textup{\tagform@{\ref*{#1}}}}%
}
\makeatother

\begin{document}

\title{{UniTAC}: Universal Task-Aware Compression via Weighted
Distortion Measures}
\author{%
    {Homa~Esfahanizadeh,~\IEEEmembership{Member,~IEEE,}
    Matin~Mortaheb,~\IEEEmembership{Member,~IEEE,}
    Adeel~Mahmood,~\IEEEmembership{Member,~IEEE,}
    Jinfeng~Du,~\IEEEmembership{Senior~Member,~IEEE,}
    and~Harish~Viswanathan,~\IEEEmembership{Fellow,~IEEE}}%
    \thanks{The authors are with Nokia Bell Labs, Murray Hill, NJ 07974, USA
    (e-mail: homa.esfahanizadeh@nokia-bell-labs.com; matin.mortaheb@nokia-bell-labs.com;
    adeel.mahmood@nokia.com; jinfeng.du@nokia-bell-labs.com;
    harish.viswanathan@nokia-bell-labs.com). \emph{(Corresponding author: Homa Esfahanizadeh.)}}%
}

\maketitle

\begin{abstract}
Lossy compression is conventionally driven by a task-agnostic distortion (e.g., MSE or MS-SSIM), yet in many emerging applications the receiver cares not about uniform fidelity but about a downstream task whose relevant content varies across the signal and evolves over time. We formulate task-aware compression as a weighted rate--distortion problem, in which a single codec is driven by a separable, per-component \emph{weighted distortion} whose weights encode task importance and may depend on the source. We introduce \emph{task consistency}, i.e., that minimizing the weighted distortion also minimizes the true task loss, and characterize when it holds: for linear tasks, the task loss admits a weighted-MSE form with signal-independent weights under suitable cross-term conditions, while for nonlinear tasks, an integrated-gradients analysis motivates separable task-aware weights.
We show how task symmetry and irrelevance further constrain the admissible weights. Guided by this theory, we realize the weight-conditioned code in a single learned Vision Transformer (ViT) codec whose token-level attention natively consumes a per-component importance vector, so one fixed backbone is re-targeted at runtime, from universal (task-agnostic) to task-specialized operation, purely by swapping the injected weights, without retraining, while producing a single human-viewable reconstruction steered to the active task. On downstream face-analysis tasks, a single model reaches 91.4\% accuracy at 0.034 bpp on a localized task, within 1.9\% of a task-specific codec (93.3\%) and well above a universal codec (76.9\%). Such task-adaptive compression suits bandwidth-constrained perception systems, e.g., in Physical AI, where the active task drifts and per-task retraining is infeasible.
\end{abstract}

\begin{IEEEkeywords}
task-aware compression, universal source coding, rate--distortion theory, weighted distortion measures, learned image compression, Vision Transformer.
\end{IEEEkeywords}

\section{Introduction}

Lossy compression is fundamentally a trade-off between rate and distortion, where fewer bits mean a less faithful reconstruction. In many emerging applications, however, the reconstruction is ultimately consumed by a downstream task, for which different parts of the signal carry unequal importance. Classical rate-distortion theory already anticipates this by allowing distortion to be measured differently across different parts of the signal~\cite{shannon1959coding}.
Yet generic learned image and feature codecs, however, are task-agnostic. They optimize rate-distortion objectives with respect to PSNR or MS-SSIM, and thus spend bits uniformly regardless of what the receiver needs~\cite{balle2018variational,esenlik2025jpegai}.

One such application domain is Physical AI, i.e., AI systems that perceive and act in the physical world, typically closing feedback loops between sensing, communication, and control~\cite{duan2022embodied}. In such systems, raw sensory streams are high-rate and redundant, yet the downstream task rarely needs all bits equally. For example, a mobile robot may only need an accurate reconstruction of regions associated with obstacles, grasp points, or safety-critical agents. On the other hand, Physical AI systems often communicate over tightly constrained links that are bandwidth-limited, time-varying, and noisy, so every bit spent on task-irrelevant content is a bit unavailable for task-relevant content or for protecting it against channel impairments. Task-based compression is therefore most valuable precisely under such harsh channel conditions. In this paper, we isolate the source-coding side of this problem, i.e., how to allocate a fixed, and possibly scarce, rate budget across the task-relevant content of the source.

Building task awareness into compression faces two practical obstacles:
\begin{itemize}
    \item \textbf{Task evolution:} In many applications, the active task changes with context. For instance, a mobile robot's task can shift from navigation to mapping to interaction. Designing and retraining a bespoke codec for each task, e.g.,~\cite{singh2020compressible,le2021icm}, is operationally expensive and often infeasible in the field.
    \item \textbf{Systematic definition and integration of the region of interest}: Prior methods rely on heuristics, such as text prompts or application metadata, to indicate salient regions~\cite{feng2023prompticm,iso15444}, but a principled mapping from the task to per-component importance, together with a way to integrate it systematically into the compressor, is lacking.
\end{itemize}
This paper develops a unifying approach that combines the benefits of universal (task-agnostic) and semantic (task-aware) compression; see Fig.~\ref{fig:paradigms}.

\begin{figure*}[t]
    \centering
    \resizebox{\textwidth}{!}{\definecolor{codegreen}{RGB}{0,120,0}
\definecolor{codered}{RGB}{200,20,20}
\definecolor{codeblue}{RGB}{20,60,200}

\begin{tikzpicture}[
    font=\small,
    >={Latex[length=2.0mm]},
    box/.style={draw, rounded corners=2pt, minimum height=7mm, minimum width=11mm,
                align=center, inner sep=2pt},
    codec/.style={box, fill=black!4},
    task/.style={draw, circle, minimum size=5mm, inner sep=0pt, font=\scriptsize},
    lab/.style={font=\footnotesize\bfseries},
    sub/.style={font=\scriptsize\itshape, text=black!60},
    flow/.style={->, thick, black!70},
]

\def\colw{4.6}        
\def\ax{0}            
\def\bx{4.9}          
\def\cx{10.6}         
\def\rowtop{0}

\begin{scope}[shift={(\ax,0)}]
  \node[lab, text=codered] (atitle) at (2.0,1.75) {Non-semantic (universal)};
  \node[sub] at (2.0,1.28) {one codec, task-agnostic};

  \node[codec, draw=codered, very thick] (aenc) at (0.7,-0.3) {Enc};
  \node[codec, draw=codered, very thick] (adec) at (2.6,-0.3) {Dec};
  \draw[flow] (aenc) -- (adec);

  \node[left=3mm of aenc, sub, align=center] (ain) {input};
  \draw[flow] (ain) -- (aenc);

  \node[task, draw=black!55] (at1) at (4.2, 0.55) {$T_1$};
  \node[task, draw=black!55] (at2) at (4.2,-0.30) {$T_2$};
  \node[task, draw=black!55] (at3) at (4.2,-1.15) {$T_3$};
  \draw[flow, black!45] (adec) -- (at1);
  \draw[flow, black!45] (adec) -- (at2);
  \draw[flow, black!45] (adec) -- (at3);
\end{scope}

\begin{scope}[shift={(\bx,0)}]
  \node[lab, text=codeblue] (btitle) at (2.0,1.75) {Task-based (semantic)};
  \node[sub] at (2.0,1.28) {one codec \emph{per} task};

  \node[codec, draw=codeblue, very thick] (bc1) at (1.85, 0.55) {Enc\,$\to$\,Dec};
  \node[codec, draw=codeblue, very thick] (bc2) at (1.85,-0.30) {Enc\,$\to$\,Dec};
  \node[codec, draw=codeblue, very thick] (bc3) at (1.85,-1.15) {Enc\,$\to$\,Dec};

  \node[sub, align=center] (bin) at (0.4,-0.30) {input};
  \draw[flow] (bin) -- (bc1.west);
  \draw[flow] (bin) -- (bc2.west);
  \draw[flow] (bin) -- (bc3.west);

  \node[task, draw=black!55] (bt1) at (4.15, 0.55) {$T_1$};
  \node[task, draw=black!55] (bt2) at (4.15,-0.30) {$T_2$};
  \node[task, draw=black!55] (bt3) at (4.15,-1.15) {$T_3$};
  \draw[flow, black!45] (bc1) -- (bt1);
  \draw[flow, black!45] (bc2) -- (bt2);
  \draw[flow, black!45] (bc3) -- (bt3);
\end{scope}

\begin{scope}[shift={(\cx,0)}]
  \node[lab, text=codegreen] (ctitle) at (2.0,1.75) {UniTAC (universal+semantic)};
  \node[sub] at (2.0,1.28) {one backbone, task $=W$};

  \node[codec, draw=codegreen, very thick, minimum width=15mm] (cenc) at (0.9,0.45) {Enc};
  \node[codec, draw=codegreen, very thick, minimum width=15mm] (cdec) at (2.9,0.45) {Dec};
  \draw[flow] (cenc) -- (cdec);

  \node[left=2mm of cenc, sub, align=center] (cin) {input};
  \draw[flow] (cin) -- (cenc);

  \node[task, draw=black!55] (ct1) at (4.4, 1.05) {$T_1$};
  \node[task, draw=black!55] (ct2) at (4.4, 0.45) {$T_2$};
  \node[task, draw=black!55] (ct3) at (4.4,-0.15) {$T_3$};
  \draw[flow, black!45] (cdec) -- (ct1);
  \draw[flow, black!45] (cdec) -- (ct2);
  \draw[flow, black!45] (cdec) -- (ct3);

  \node[box, draw=codegreen, fill=codegreen!8, very thick] (cw) at (2.0,-1.35)
       {$W(T_i)$};
  \draw[->, thick, codegreen] (cw.west) -| (cenc.south);
  \draw[->, thick, codegreen] (cw.east) -| (cdec.south);
\end{scope}

\draw[black!20, thick] (4.75,-1.95) -- (4.75,1.95);
\draw[black!20, thick] (9.65,-1.95) -- (9.65,1.95);

\end{tikzpicture}}\vspace{-0.3cm}
    \caption{Three regimes of learned compression. \textbf{(left) Non-semantic (universal):} a single task-agnostic codec spends bits uniformly, so it serves every task but is specialized to none. \textbf{(middle) Task-based (semantic):} a separate codec is trained for each task, so task drift forces retraining a new codec. \textbf{(right) UniTAC (ours):} a single shared backbone in which the task is abstracted as a per-component importance vector $W$ injected at runtime; swapping $W$ specializes the same model to any task without retraining.\vspace{-0.3cm}}
    \label{fig:paradigms}
\end{figure*}

We represent the signal as a collection of components $\{\boldsymbol{x}_i\}_{i=1}^{n}$ and drive a single compressor with a \textit{weighted distortion} $\sum_{i}w_i D(\boldsymbol{x}_i,\boldsymbol{\hat{x}}_i)$, where $\boldsymbol{\hat{x}}_i$ is the compressor's reconstruction of component $\boldsymbol{x}_i$ and the weights $\{w_i\}$ encode task saliency and can depend on the signal. Our contributions are threefold. \emph{(i) Framework:} we cast task-aware compression as weight-conditioned compression, in which a task descriptor is injected as a per-component importance vector $W$ that conditions both the encoder and decoder. The compressor is trained once over a broad family of such vectors, so that task drift is handled at runtime by updating $W$ rather than retraining. \emph{(ii) Theory:} we analyze the underlying weighted rate-distortion problem and characterize when a diagonal weighted distortion is \emph{task-consistent} and how the weights relate to task sensitivity, giving a principled \emph{task to importance} mapping. \emph{(iii) Design and evaluation:} guided by this analysis, we design a Vision Transformer (ViT) codec whose token-level conditioning natively realizes the weight-driven code, and show on downstream tasks that a single UniTAC model approaches a task-specific codec while improving over a universal one at nearly equal rate. 
Taken together, these contributions position UniTAC between the two extremes of Fig.~\ref{fig:paradigms}: unlike a task-agnostic codec, it targets the content that matters for the active task, and unlike a task-specific codec, it does so within a single trained backbone that requires no retraining as the task changes.

The remainder of the paper is organized as follows. Section~\ref{sec:related} reviews related work on learned compression and task-oriented communication. Section~\ref{sec:framework} introduces the weighted-distortion framework and the notion of task consistency. Section~\ref{sec:theory} develops the theoretical analysis linking the weights to task sensitivity. Section~\ref{sec:design} presents the ViT-based UniTAC codec, and Section~\ref{sec:eval} reports the experimental evaluation. Section~\ref{sec:structural} establishes further structural properties of task-consistent weights. Section~\ref{sec:conclusion} concludes.

\section{Related Work}\label{sec:related}

\subsection{Rate--distortion theory with task-dependent distortion}\label{sec:related-rd}

Our formulation is an instance of classical rate--distortion theory, which characterizes the optimal tradeoff between compression rate and a distortion measure between a source (here, vector-valued) and its reconstruction~\cite{shannon1959coding}. The original treatment permits general distortion measures, including context-dependent local distortion criteria. Because our task loss is evaluated on a downstream function of the source rather than on the source itself, our problem is also an instance of \emph{indirect} rate-distortion theory, in which fidelity is measured against a variable correlated with, but not identical to, the coded source~\cite{Witsenhausen_indirect_RD}. That classical theory characterizes the fundamental rate-distortion tradeoff for a \emph{fixed} indirect distortion criterion. We instead ask when a \emph{separable, per-component} surrogate distortion, whose weights may vary per sample and which is injected into a single learned codec, is consistent with that criterion, a question we term \emph{task consistency}.

\subsection{Universal image and feature compression}
Classical rate--distortion codecs (JPEG, BPG, HEVC/VVC intra) and learned image codecs based on nonlinear transform coding with factorized or hyperprior entropy models~\cite{balle2017end,balle2018variational} are all trained to minimize a generic distortion (PSNR / MS-SSIM) at a given rate. This effort has culminated in JPEG AI (ISO/IEC~6048 | ITU-T~T.840)~\cite{jpegai_standard,esenlik2025jpegai}, the first end-to-end learned image-coding standard, which specifies convolutional analysis/synthesis transforms, a hyperprior with a multistage context entropy model, and a 3D gain unit for spatially variable quantization and region-of-interest control. All of these codecs are inherently task-agnostic and allocate rate uniformly across the image. Even JPEG AI is trained on a fixed distortion (a weighted combination of MSE and MS-SSIM), and its gain unit provides only a manually specified region-of-interest rather than a task-derived importance signal that conditions the coding process. A related information-theoretic work establishes achievability for universal lossy compression when the distortion measure is specified only at runtime~\cite{mahmood2023universal}.

\subsection{Task-driven and information-bottleneck source coding}
A growing body of work departs from reconstruction-centric coding and instead optimizes transmission for a downstream objective, known as task-oriented or semantic communication~\cite{gunduz2023beyond,strinati2021semantic}. 
A representative approach~\cite{singh2020compressible} trains a feature extractor, task head, and entropy model end-to-end to minimize a task loss plus a rate penalty on the features. Such single-task feature codecs are information-theoretically driven toward the minimal task-sufficient statistic (a soft label at the optimum) and therefore discard the input. Other work such as~\cite{le2021icm,duan2020vcm} instead trains the codec against a combination of a task loss, a generic distortion loss, and a rate loss. The distortion term keeps these codecs reconstructable, but the task is still baked into the trained weights, so serving a new task requires retraining.

The information-bottleneck (IB) principle~\cite{tishby1999ib} characterizes the optimal trade-off between the rate of a compressed representation and the information it retains about a relevant target variable, and its deep variational realization~\cite{alemi2017vib} makes this trade-off trainable in neural networks. Practical IB-based methods shape data for a specific downstream task by optimizing mutual-information objectives, e.g., InfoShape for images~\cite{esfahanizadeh2023infoshape} and TexShape for sentence embeddings~\cite{kale2024texshape}. These formulations analyze \emph{what} a task-relevant representation should retain, but they optimize a single fixed relevance variable, yield a representation specialized to that one objective, and neither reconstruct the source nor re-specialize a single codec to a new task at runtime.

\subsection{Multi-task and adaptive codecs}
Closest to UniTAC is a line of multi-task codecs, which fall into two categories: coding for machines, and coding for both machines and humans. The latter is more aligned with Physical AI, where a decoded signal must serve a downstream task while remaining usable for human viewing.

Prompt-based image coding for machines~\cite{feng2023prompticm} conditions a single feature codec with task-driven ``prompts.'' Each prompt is an importance map that tells the codec to spend more bits on task-relevant regions and fewer elsewhere, giving spatially uneven, content-aware bit allocation. Because the codec is trained on a wide, randomized family of such maps, one model can serve several downstream tasks simply by being given a different map. It differs from UniTAC in several ways. It compresses and reconstructs backbone features under a downstream task loss and thus yields no human-viewable image. Its importance map is produced by a learned information selector that, together with task-adaptive prompts injected into the downstream network, must be fine-tuned per task, so a task-specific parameter set is still required at inference. Finally, its importance map conditions only the encoder, is never transmitted, and drives only local, position-wise modulation rather than long-range, content-dependent interaction. Multi-Path Aggregation~\cite{zhang2024mpa} serves both human viewing and machine vision from one transformer codec by inserting a shared main path plus per-task side paths. For this method, a dedicated side path (and predictor) is added and fine-tuned per task in a second training stage, supporting only a fixed, predefined task set rather than arbitrary or unseen tasks. Scalable human--machine coding~\cite{choi2022scalable} likewise serves both consumers, but through a fixed, predefined layered task hierarchy rather than a runtime importance vector.

Unlike these task/importance-conditioned codecs, UniTAC is agnostic to how the importance vector is derived. It is a single universal image codec that receives the importance vector as transmitted low-overhead side information, conditions both encoder and decoder on it, and reconstructs a human-viewable, general-purpose image whose fidelity is prioritized for the targeted task, enabling runtime re-targeting to any task without retraining.

A related line inserts task/importance-awareness at the \emph{channel}-coding level rather than the source-coding level: task-aware joint source--channel coding (JSCC), which maps source content directly to channel symbols rather than through separate compression and channel codes~\cite{bourtsoulatze2019deepjscc,tung2022deepwive}, folds semantic or task importance into \emph{unequal channel error protection}, e.g., via a multi-level reliability interface that maps source semantics to graded channel reliability~\cite{tung2025multilevel}, or error-resilient, low-latency video transmission that prioritizes semantically important content against block erasures and channel impairments~\cite{fayaz2025robust,mortaheb2026semantic}. This is complementary to UniTAC, which assumes an error-free transport layer and instead allocates rate at the source-coding stage.

\subsection{Task-to-importance mapping}
A body of explainability research identifies which parts of an input a trained model relies upon, producing per-input importance or saliency maps. These methods fall into a few families: gradient- and activation-based methods that back-propagate or pool network responses onto the input (e.g., Grad-CAM~\cite{selvaraju2017gradcam}); perturbation- and mask-based methods that find the smallest region whose deletion or retention most changes the prediction~\cite{fong2017perturbation}; and path-attribution methods that integrate gradients along a path from a baseline to the input. Integrated Gradients~\cite{sundararajan2017axiomatic} is the canonical path method, requiring a reference baseline and satisfying desirable axioms such as completeness. Its variants average over baselines or distributions (e.g., SmoothGrad~\cite{smilkov2017smoothgrad} and expected gradients / attribution priors~\cite{erion2019attribution}) for smoother, distribution-aware maps. Related sensitivity measures based on the task Jacobian or Fisher information similarly quantify how strongly each input component influences the output. This prior art was developed for interpretability, and does not connect the resulting importance to rate allocation, condition an encoder/decoder on it, or use it to weight a reconstruction distortion. UniTAC repurposes such task sensitivity as a principled task-to-importance signal.

\section{A Weighted-Distortion Framework for Task-Conditioned Compression}\label{sec:framework}

Throughout, boldface symbols denote random quantities and non-bold symbols their realizations (or deterministic quantities); uppercase letters denote vectors and matrices, and lowercase letters denote scalars and sub-vectors/sub-matrices (components, columns, and entries); and calligraphic letters denote sets. Thus, $\boldsymbol{X}$ is a random vector with realization $X$, $\boldsymbol{x}_i$ is a random component with realization $x_i$, $W$ is a (deterministic) weight vector with entries $w_i$, and $\mathcal{P},\mathcal{S}$ denote sets. We write $\mathbb{R}$ and $\mathbb{R}_{+}$ for the reals and nonnegative reals, $(\cdot)^T$ for transpose, and $\mathbb{E}[\cdot]$ for expectation. 

Let $\boldsymbol{X}$ and $\boldsymbol{\hat{X}}$ denote the source and reconstruction vectors, respectively. Mathematically, we assume that the reconstruction is produced via a \emph{test channel} $P(\hat{X}{\mid}X)$, which is a conditional distribution mapping a source realization to a reconstruction and abstracts any (possibly stochastic) encoder-decoder pair. Let the task be specified by a function $f:\mathbb{R}^n\rightarrow\mathbb{R}^m$, which may be applied either to the original source vector (yielding $f(\boldsymbol{X})$) or to its reconstruction (yielding $f(\boldsymbol{\hat{X}})$). A natural task-fidelity criterion is the mean squared error (MSE) in task space:
\begin{equation}
    \mathcal{L}_\text{task}(P)\triangleq \mathbb{E}\left[||f(\boldsymbol{X})-f(\boldsymbol{\hat{X}})||_2^2\right],
\end{equation}
where we write $P = P(\hat{X}{\mid}X)$ as shorthand, and the expectation above is with respect to the joint distribution $P_{\boldsymbol{X}} \times P$. Our key proposal is to compress against a separable weighted distortion $D_{W}(P)=\mathbb{E}[D_{W}(\boldsymbol{X},\boldsymbol{\hat{X}})]$, where
\begin{equation}\label{eq:separable_w_distortion}
    D_{W}({X},{\hat{X}})\triangleq\sum_{i=1}^n w_i({X})\, D({x}_i,{\hat{x}}_i),\;\;w_i({X})\geq0.
\end{equation}
Here, $D({x}_i,{\hat{x}}_i)\geq 0$ is a per-component distortion (e.g., the squared error $||{x}_i-{\hat{x}}_i||_2^2$) and the weight $w_i({X})$ quantifies the task importance of the $i$th component (i.e., how strongly changes in $x_i$ influence the task output $f({X})$). The weights may be \emph{sample-dependent} and produced by a weight map $W:\mathbb{R}^n\to\mathbb{R}_{+}^n$.

The compression rate is measured by the mutual information $I(\boldsymbol{X};\boldsymbol{\hat{X}})$, and the \emph{feasible set} at rate budget $R$ is the collection of all test channels satisfying this budget:
\begin{equation}\label{eq:feasible_set}
    \mathcal{P}(R)\triangleq\left\{\,P(\hat{X}{\mid}X):\, I(\boldsymbol{X};\boldsymbol{\hat{X}})\leq R\,\right\}.
\end{equation}
The goal is to choose the weights so that a rate-constrained compressor, abstracted as a test channel $P \in \mathcal{P}(R)$, minimizing the weighted distortion $D_{W}(P)$ also drives down the task distortion $\mathcal{L}_\text{task}(P)$; when it does, we call the weighted distortion \emph{task-consistent}. 
The results in this paper address when minimizing the \emph{weighted distortion} $D_{W}(P)$ (the separable surrogate optimized by the codec) over $\mathcal{P}(R)$ also minimizes the true \emph{task loss} $\mathcal{L}_\text{task}(P)$ (the quantity we ultimately care about). Throughout, we assume that every optimization problem whose argmin set is used below attains its minimum.
\begin{definition}[Task consistency]\label{def:task_consistency}
For a rate budget $R>0$, the weighted distortion $D_W$ is {task-consistent at rate $R$} if
\begin{equation}\label{eq:task_consistency}
 \argmin_{P\in\mathcal{P}(R)} D_{W}(P)\;\subseteq\;\argmin_{P\in\mathcal{P}(R)} \mathcal{L}_\text{task}(P),
\end{equation}
i.e., every minimizer of the weighted distortion over the feasible set $\mathcal{P}(R)$ also minimizes the task loss over that set. The weight map $W$ for which $D_W$ is task-consistent will be called a task-consistent weight map. 
\end{definition}

Exact consistency is often too stringent. It suffices that a compressor tuned to $D_{W}$ is \emph{near-optimal} for the task. We therefore relax Definition~\ref{def:task_consistency} via a suboptimality gap.

\begin{definition}[$\delta$-approximate task consistency]\label{def:approx_task_consistency}
For a tolerance $\delta\geq 0$, the weighted distortion $D_W$ is \emph{$\delta$-task-consistent at rate $R>0$} if every $P^\star\in\argmin_{P\in\mathcal{P}(R)}D_{W}(P)$ satisfies
\begin{equation}\label{eq:approx_task_consistency}
    \mathcal{L}_\text{task}(P^\star)\;\leq\;\min_{P\in\mathcal{P}(R)}\mathcal{L}_\text{task}(P)+\delta.
\end{equation}
Definition~\ref{def:task_consistency} is the exact case $\delta=0$.
\end{definition}

\begin{remark}
    We also extend Definitions \ref{def:task_consistency} and \ref{def:approx_task_consistency} to the case when the feasible set of channels $\mathcal{P}(R)$ is replaced by a subset $\mathcal{P}'(R) \subseteq \mathcal{P}(R)$ satisfying additional constraints.
\end{remark}
The framework suggests training a single learned compressor that takes the importance-weight vector $W$ as an input conditioning signal, so that one fixed backbone can emulate a whole family of task-specific codecs simply by changing the task abstraction $W$. At runtime, a task engine re-estimates $W$ as the active task evolves, e.g., via reinforcement learning, gradient/saliency attribution, or a user prompt, and the compressor immediately reallocates bits toward the components that $W$ marks as important, without any retraining. To obtain such a weight-conditioned codec, we do not fix the weights during training, but instead expose the model to a broad family of importance vectors, so that it learns to respond correctly to any weighting it may later be given.

\section{Theoretical Analysis: Toward Task-Consistent Weighted Distortion}\label{sec:theory}
We now analyze when the separable weighted distortion is a faithful surrogate for the task loss, and what weights this dictates. Throughout, we focus on quadratic per-component distortions $D({x}_i,{\hat{x}}_i)=||{x}_i-{\hat{x}}_i||_2^2$ for analytical clarity, so $D_{W}({X},{\hat{X}})=\sum_{i=1}^{n} w_i({X})||{x}_i-{\hat{x}}_i||_2^2$. All proofs are deferred to the appendix.

\subsection{From task loss to weighted distortion via sensitivity}

Using a first-order (local) approximation of $f$, 
\begin{equation}
    f(\hat{X})\approx f(X)+J_f(X)(\hat{X}-X), \label{approxexactee}
\end{equation}
where $J_f(X)=\nabla_{X} f\in\mathbb{R}^{m\times n}$ is the Jacobian matrix. Then
\begin{equation}
    ||f(X)-f(\hat{X})||_2^2\approx (\hat{X}-X)^T J_f(X)^T J_f(X)(\hat{X}-X). \label{approx1st}
\end{equation}
We define $G(X)\triangleq J_f(X)^T J_f(X)$. This shows that the ideal distortion in $X$-space is a quadratic form with matrix weight $G(X)$, which is generally non-separable across components. 

Let $E \triangleq \hat{X}-X$. Then, assuming scalar components, we have $E^T G(X)\,E=\sum_i g_{ii}(X)e_i^2+\sum_{i\neq j}g_{ij}(X)e_ie_j$. Taking expectations, we obtain the following first-order approximation of the task loss, valid for any fixed channel $P$:
\begin{align}
\widetilde{\mathcal{L}}_{\text{task}}(P) \triangleq \mathbb{E}[\boldsymbol{E}^T G(\boldsymbol{X})\,\boldsymbol{E}] = D_W(P) + \sum_{i \neq j}\mathbb{E}\left [ g_{ij}(\boldsymbol{X}) \boldsymbol{e}_i \boldsymbol{e}_j \right],   \label{firs3jh}
\end{align}
where $W({X}) = (g_{11}({X}), \ldots, g_{nn}({X}))$, so the separable surrogate $D_W$ differs from $\widetilde{\mathcal{L}}_\text{task}$ by exactly the off-diagonal (cross-component) term. The separable surrogate is faithful as a first-order approximation when $G(X)$ is a diagonal matrix. We call this \emph{task orthogonality} since this is a property of the task function $f$. Equivalently, $g_{ij}(X)=\langle\partial f/\partial x_i,\partial f/\partial x_j\rangle = 0$ for all $i\neq j$, meaning that the task has no first-order interaction between distinct components.

Define a regularized set of feasible test channels by
\begin{equation}\label{l3.}
\mathcal{P}_{\mathrm{reg}}(R)\triangleq \{P\in\mathcal{P}(R): \sum_{i \neq j}\mathbb{E}\left [ g_{ij}(\boldsymbol{X}) \boldsymbol{e}_i \boldsymbol{e}_j \right] = 0 \}.
\end{equation}
We refer to $\mathcal{P}_{\mathrm{reg}}(R)$ as a
\emph{cross-term-canceling channel class} for the given task, since the off-diagonal term in
\eqref{firs3jh} vanishes for every $P\in\mathcal{P}_{\mathrm{reg}}(R)$. If
the minimizations in Definitions \ref{def:task_consistency} and \ref{def:approx_task_consistency} are over the restricted set $\mathcal{P}_{\mathrm{reg}}(R)$, then the separable surrogate $D_W$ is faithful as a first-order approximation.

\subsection{Linear tasks: reduction to weighted MSE}

We first specialize to \emph{linear} tasks, for which \eqref{approxexactee} holds with
equality. Then, over
$\mathcal{P}_{\mathrm{reg}}(R)$ as defined in \eqref{l3.}, minimizing the task
loss is equivalent to minimizing a weighted MSE with closed-form
weights, as we show in the following
proposition.

\begin{prop}[Exact reduction on a cross-term-canceling channel class]
Let $f(X)=SX$ with $S\in\mathbb{R}^{m\times n}$, and let $s_{:,i}$ denote the $i$-th column of $S$. Since $f$ is linear, $J_f(X)=S$, so $g_{ij}(X) = \langle s_{:,i},s_{:,j}\rangle$ for every $X$. Then, over the
cross-term-canceling channel class $\mathcal{P}_{\mathrm{reg}}(R)$ defined in
\eqref{l3.}, minimizing the task loss is equivalent to minimizing
\begin{align}
    \mathbb{E}_P\left[
    \sum_i \|s_{:,i}\|_2^2
    (\boldsymbol{x}_i-\hat{\boldsymbol{x}}_i)^2
    \right].
    \label{msell}
\end{align}
Thus, the weighted distortion $D_W$ with weights
$w_i=\|s_{:,i}\|_2^2$ yields a task-consistent separable distortion on
$\mathcal{P}_{\mathrm{reg}}(R)$.
\label{prop:multioutput}
\end{prop}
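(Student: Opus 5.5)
The plan is to show that, for a linear task, the task loss and the weighted distortion coincide \emph{exactly} (not merely to first order) on $\mathcal{P}_{\mathrm{reg}}(R)$. Two minimization problems over the same set whose objectives are identical functions of $P$ have identical argmin sets, which gives task consistency immediately. So the proof reduces to an algebraic identity followed by a one-line set-theoretic conclusion.

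\textbf{Step 1: exact quadratic form.} Since $f(X)=SX$, we have $f(X)-f(\hat X)=-SE$ with $E=\hat X-X$, and therefore
\[
\|f(X)-f(\hat X)\|_2^2=E^T S^TS\,E
\]
holds pointwise. This is the step where linearity is used: \eqref{approxexactee} and \eqref{approx1st} hold with equality, and $G(X)=S^TS$ does not depend on $X$. Taking expectations under $P_{\boldsymbol X}\times P$ gives $\mathcal{L}_\text{task}(P)=\widetilde{\mathcal{L}}_\text{task}(P)$ for every test channel $P$.

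\textbf{Step 2: split into diagonal and cross terms.} Expanding the quadratic form as in \eqref{firs3jh}, with $g_{ii}=\|s_{:,i}\|_2^2$ and $g_{ij}=\langle s_{:,i},s_{:,j}\rangle$, we get
\[
\mathcal{L}_\text{task}(P)=\mathbb{E}\Big[\sum_i \|s_{:,i}\|_2^2(\boldsymbol{x}_i-\hat{\boldsymbol{x}}_i)^2\Big]+\sum_{i\neq j}\mathbb{E}[g_{ij}\boldsymbol{e}_i\boldsymbol{e}_j].
\]
For every $P\in\mathcal{P}_{\mathrm{reg}}(R)$, the second sum is zero by the definition \eqref{l3.}. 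Hence $\mathcal{L}_\text{task}(P)$ equals \eqref{msell}, which is $D_W(P)$ with $w_i=\|s_{:,i}\|_2^2$, for every $P$ in that set. The weights are nonnegative, as \eqref{eq:separable_w_distortion} requires.

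\textbf{Step 3: conclude.} Since the two functionals agree on $\mathcal{P}_{\mathrm{reg}}(R)$, their argmin sets over it are equal. In particular, the inclusion \eqref{eq:task_consistency}, in the restricted sense of the Remark, holds, and the two minimization problems are equivalent.

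The only points needing care are integrability and the index conventions. We need $\mathbb{E}\|\boldsymbol E\|^2<\infty$ so that the expectation can be split term by term; if a channel has infinite distortion, both objectives are infinite and it is excluded from both argmin sets anyway. If components are vectors rather than scalars, $s_{:,i}$ should be read as a column block $S_i$ and $\|s_{:,i}\|_2^2$ replaced by $S_i^TS_i$. I will state the result for scalar components, as in the statement. I expect no step to be a genuine obstacle: the main subtlety is that the statement is only meaningful because the cross-term constraint is built into the feasible set.
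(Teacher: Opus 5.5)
Your proof is correct and is essentially the paper's argument. Linearity makes \eqref{approx1st} exact with $G=S^TS$, and the vanishing cross term on $\mathcal{P}_{\mathrm{reg}}(R)$ gives $\mathcal{L}_\text{task}(P)=D_W(P)$ there, so the argmin sets coincide; your explicit argmin step and integrability remark are details the paper leaves implicit, and the latter is in fact automatic, since membership in $\mathcal{P}_{\mathrm{reg}}(R)$ already requires each cross-term expectation to be finite, so the identity holds in $[0,\infty]$ without assuming $\mathbb{E}\|\boldsymbol{E}\|^2<\infty$.
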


\begin{remark}[Independent sources]
{Suppose that the components of $\boldsymbol{X}$ are independent.
Then, when minimizing \eqref{msell} over the unrestricted feasible set
$\mathcal{P}(R)$, an optimal channel can be chosen componentwise.
Moreover, such a channel can be chosen so that
$\mathbb{E}_P[\boldsymbol{e}_i\boldsymbol{e}_j]=0$ for all $i\neq j$,
and hence belongs to the cross-term-canceling channel class
$\mathcal{P}_{\mathrm{reg}}(R)$ for any linear task as defined in \eqref{l3.}.
When the source components are independent Gaussian random variables,
the componentwise channel is characterized by weighted reverse water-filling.
However, an optimal channel for \eqref{msell} need not be optimal for the
unrestricted task-loss optimization over $\mathcal{P}(R)$, since the latter
may benefit from correlated reconstruction errors.}
\end{remark}

Proposition 
\ref{prop:multioutput} establishes task consistency of the weighted distortion when the minimizations in Definition \ref{def:task_consistency} are over the cross-term-canceling channel class for the given task. We next consider an arbitrary feasible subset of channels $\mathcal{P}'(R) \subseteq \mathcal{P}(R)$. Then under a finiteness condition, we characterize the value of $\delta$ in the $\delta$-approximate task consistency framework of Definition~\ref{def:approx_task_consistency}.

\begin{prop}[$\delta$-task consistency for linear tasks]
\label{delta_prop}
Let $f({X})=S {X}$ with $S \in \mathbb{R}^{m \times n}$. Define
\begin{align}
    C_S(P)\triangleq
    \sum_{i\neq j}
    \left\langle s_{:,i},s_{:,j}\right\rangle
    \mathbb{E}_P[\boldsymbol{e}_i\boldsymbol{e}_j].
\end{align}
Let $\mathcal{P}'(R) \subseteq \mathcal{P}(R)$ be any feasible set of channels such that 
\begin{align}
    \delta_S(R)\triangleq
    \sup_{P,Q\in\mathcal{P}'(R)}
    \left|C_S(P)-C_S(Q)\right|<\infty.
\end{align}
Then, the weighted distortion $D_W$ with weights $w_i=\|s_{:,i}\|_2^2$ is $\delta_S(R)$-task-consistent at rate $R$, with the minimizations in Definition \ref{def:approx_task_consistency} taken over $\mathcal{P}'(R)$.
\end{prop}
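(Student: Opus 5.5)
The plan is to use the exact decomposition of the task loss for linear tasks, and then a standard two-minimizer comparison. Since $f(X)=SX$, we have $f(\boldsymbol{X})-f(\hat{\boldsymbol{X}})=-S\boldsymbol{E}$ exactly. Hence $\|f(\boldsymbol{X})-f(\hat{\boldsymbol{X}})\|_2^2=\boldsymbol{E}^TS^TS\boldsymbol{E}$, and the first-order expansion \eqref{firs3jh} holds with equality, with $G=S^TS$ constant and $g_{ij}=\langle s_{:,i},s_{:,j}\rangle$. Taking expectations, for every channel $P$,
\begin{equation*}
\mathcal{L}_\text{task}(P)=D_W(P)+C_S(P),\qquad w_i=\|s_{:,i}\|_2^2 .
\end{equation*}
This identity should be stated carefully: it requires the second moments $\mathbb{E}_P[\boldsymbol{e}_i\boldsymbol{e}_j]$ to be finite so that the split into diagonal and off-diagonal parts is legitimate. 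If $D_W(P)<\infty$, Cauchy--Schwarz gives finiteness of the cross moments for columns with $w_i>0$. Columns with $s_{:,i}=0$ contribute nothing to either side.

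Next, fix any $P^\star\in\argmin_{P\in\mathcal{P}'(R)}D_W(P)$ and let $Q^\star\in\argmin_{P\in\mathcal{P}'(R)}\mathcal{L}_\text{task}(P)$, which exists by the standing attainment assumption. Applying the identity to both channels gives
\begin{equation*}
\mathcal{L}_\text{task}(P^\star)-\mathcal{L}_\text{task}(Q^\star)=\bigl[D_W(P^\star)-D_W(Q^\star)\bigr]+\bigl[C_S(P^\star)-C_S(Q^\star)\bigr].
\end{equation*}
The first bracket is $\le 0$ by optimality of $P^\star$ for $D_W$ over $\mathcal{P}'(R)$. The second bracket is at most $|C_S(P^\star)-C_S(Q^\star)|\le\delta_S(R)$ by definition of the supremum, since both channels lie in $\mathcal{P}'(R)$. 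Therefore $\mathcal{L}_\text{task}(P^\star)\le\min_{\mathcal{P}'(R)}\mathcal{L}_\text{task}+\delta_S(R)$, which is exactly \eqref{approx_task_consistency} with the feasible set $\mathcal{P}'(R)$, as allowed by the Remark following Definition~\ref{def:approx_task_consistency}.

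The only delicate step is the finiteness bookkeeping, i.e., making sure no $\infty-\infty$ appears when subtracting. The hypothesis $\delta_S(R)<\infty$ guarantees that $C_S$ is finite on $\mathcal{P}'(R)$, up to a common infinite value, which I would exclude by assuming $C_S$ is finite for at least one channel. The minimal values of $D_W$ and $\mathcal{L}_\text{task}$ should be assumed finite, as is implicit in the attainment assumption. If $D_W(Q^\star)=\infty$, the first bracket is $-\infty$, which contradicts the finiteness of $\mathcal{L}_\text{task}(Q^\star)$ together with the finiteness of $C_S(Q^\star)$. So all terms are finite and the rearrangement is valid.

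As a sanity check, on $\mathcal{P}_{\mathrm{reg}}(R)$ the function $C_S$ is identically zero. Then $\delta_S=0$, and the argument recovers Proposition~\ref{prop:multioutput}.
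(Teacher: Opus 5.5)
Your proof is correct and follows the paper's argument: you use the exact identity $\mathcal{L}_{\text{task}}(P)=D_W(P)+C_S(P)$ for linear $f$, compare a $D_W$-minimizer with a task-loss minimizer over $\mathcal{P}'(R)$, and bound the cross-term difference by $\delta_S(R)$. Your finiteness bookkeeping, which rules out an $\infty-\infty$ when subtracting, is a sensible addition that the paper leaves implicit.
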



\begin{remark}
A linear change of coordinates can recover exact separability in transformed components. In particular, let $S=U\Sigma V^T$ be a singular value decomposition of $S$. Then, the orthogonal transform $\boldsymbol{Z}=V^T\boldsymbol{X}$, $\hat{\boldsymbol{Z}}=V^T\hat{\boldsymbol{X}}$ preserves mutual information and expresses the task loss exactly as a weighted MSE with weights given by the squared singular values, independently of error correlations.
Such transformed components can, however, be dense mixtures of the original pixels or patches and therefore do not provide the per-original-component importance vector required by UniTAC. 
\end{remark}

\subsection{From local gradients to integrated gradients}

In Proposition \ref{delta_prop}, exact task consistency does not generally hold over the unrestricted feasible set of channels $\mathcal{P}(R)$. This is because the task loss is approximated by the local quadratic form in \eqref{approx1st} induced by $G(X)=J_f(X)^T J_f(X)$, which is generally non-diagonal. Retaining only the diagonal of $G$ yields the sample-dependent weights 
$w_i(X) =  g_{ii}({X})= \|\partial f/\partial x_i\|_2^2$. 
These weights make $D_{W}(P)$ $\delta$-approximately task-consistent, where the gap $\delta$ arises from the neglected cross terms
$\sum_{i\neq j}\mathbb{E}[g_{ij}(\boldsymbol{X})\boldsymbol{e}_i\boldsymbol{e}_j]$.
For nonlinear tasks, the linearization error in \eqref{approx1st} further prevents an exact reduction to a task-consistent weighted distortion.

To obtain an alternative sensitivity measure, we replace the local first-order approximation in \eqref{approx1st} with an exact line-integral representation of $f(X)-f(\hat X)$ based on path-averaged gradients. For simplicity, we present the construction for a scalar task $f:\mathbb{R}^n\to\mathbb{R}$, although the same argument applies componentwise to a vector-valued task. Applying the fundamental theorem of calculus along the straight-line path from $\hat{X}$ to $X$, we obtain  
\begin{align*}
    f(X)-f(\hat{X})&=\sum_{i=1}^n (x_i-\hat{x}_i)\,\bar{g}_i(X, \hat{X}),\\
    \bar{g}_i(X, \hat{X})&\triangleq\int_0^1 \frac{\partial f}{\partial x_i}\big(\hat{X}+\alpha(X-\hat{X})\big)\,d\alpha.
\end{align*}
The quantity $(x_i-\hat{x}_i)\bar g_i(X,\hat X)$ is precisely the integrated-gradients attribution assigned to component $i$ along the reconstruction-to-source path.
In contrast, $\partial f/\partial x_i(X)$ is a purely local sensitivity measure, depending only on $X$. As $\hat X\to X$, the path collapses and
$\bar g_i\to \partial f/\partial x_i(X)$, recovering the local gradient weights used in the first-order approximation. We write $\bar g_i$ for $\bar g_i(X,\hat X)$ once the arguments are clear from the context.

Writing $b_i\triangleq\bar{g}_i\,(x_i-\hat{x}_i)$ and applying the Cauchy--Schwarz inequality, $(\boldsymbol{1}^T B)^2\leq n\sum_i b_i^2$, then taking expectations yields the bound
\begin{equation}\label{eq:igbound}
    \mathbb{E}[(f(\boldsymbol{X})-f(\boldsymbol{\hat{X}}))^2]\;\leq\;n\,\mathbb{E}\Big[\textstyle\sum_{i=1}^n \bar{g}_i^2\,(\boldsymbol{x}_i-\boldsymbol{\hat{x}}_i)^2\Big].
\end{equation}
The right-hand side is $n$ times a separable weighted distortion with sample-dependent weights $w_i(X)=\bar g_i^2$, motivating its use as a task-aware surrogate objective for controlling the task loss. 
However, the coefficient $\bar{g}_i = \bar{g}_i(X, \hat{X})$ depends on $\hat{X}$ whereas the weights $w_i(X)$ must be
derived from the source and supplied to the codec before the
reconstruction exists. To obtain a reconstruction-independent proxy, we anchor the path at a fixed task-appropriate baseline $X_0$ (e.g., a black image), so that $\bar g_i$ becomes the integrated gradient of $f$ along the path from $X_0$ to $X$~\cite{sundararajan2017axiomatic}. This per-sample attribution map is used in our experiments (Section~\ref{sec:eval}).

\section{Codec Design: ViT-based UniTAC}\label{sec:design}

The theory of Section~\ref{sec:theory} prescribes what a task-aware codec should do (i.e., allocating rate across components in proportion to a per-component importance vector), but leaves open how a single learned compressor can realize an entire family of such allocations without retraining. We now instantiate this principle in a ViT-based codec whose token-level conditioning natively consumes $W$. 

The design has a two-stage transformer autoencoder that maps the image to a compact latent space and back, mechanisms that inject the importance map into the attention computation so that capacity is steered toward the components $W$ marks as important, and a hyperprior entropy model that turns the latent representation into a bitstream and yields a differentiable rate estimate. 
Training against the weighted distortion of \eqref{eq:separable_w_distortion} over a broad randomized family of importance maps produces a single backbone that is specialized at runtime purely by swapping $W$. Fig.~\ref{fig:architecture} shows the overall pipeline. 

\begin{figure}[t]
    \centering
    \resizebox{\linewidth}{!}{\begin{tikzpicture}[
    font=\normalsize,
    >={Latex[length=2.2mm]},
    block/.style={
        draw, rounded corners=2pt, thick,
        minimum width=3.6cm, minimum height=0.95cm,
        align=center, fill=black!3
    },
    wblock/.style={
        draw=orange!70!black, rounded corners=2pt, very thick,
        minimum width=3.6cm, minimum height=0.95cm,
        align=center, fill=orange!12
    },
    eblock/.style={
        draw=teal!55!black, rounded corners=2pt, thick,
        minimum width=3.0cm, minimum height=1.1cm,
        align=center, fill=teal!8
    },
    io/.style={align=center, font=\normalsize\itshape},
    flow/.style={->, thick},
    wflow/.style={->, thick, orange!70!black, dashed},
    eflow/.style={->, thick, teal!55!black},
]

\node[block]  (e1) {Patch Embed \\ {\footnotesize $4\times4$ patches $\rightarrow$ tokens}};
\node[wblock, below=0.6cm of e1] (e2) {Neighborhood Attn.\ $\times 4$\\{\footnotesize dilated local attention}};
\node[block,  below=0.6cm of e2] (e3) {Patch Merging\\{\footnotesize $2\times2$ tokens $\rightarrow$ token}};
\node[wblock, below=0.6cm of e3] (e4) {Sparse Global Attn.\ $\times 4$\\{\footnotesize to $T$ important tokens}};

\node[wblock, right=4.6cm of e4] (d1) {Sparse Global Attn.\ $\times 4$\\{\footnotesize to $T$ important tokens}};
\node[block,  above=0.6cm of d1] (d2) {Patch Expanding\\{\footnotesize token $\rightarrow$ $2\times2$ tokens}};
\node[block,  above=0.6cm of d2] (d3) {Neighborhood Attn.\ $\times 4$\\{\footnotesize dilated local refinement}};
\node[block,  above=0.6cm of d3] (d4) {Patch Unembed \\ {\footnotesize token $\rightarrow$ $4\times4$ patch}};

\coordinate (mid) at ($(e4)!0.5!(d1)$);
\node[eblock] (ent) at ($(mid)+(0,-1.4cm)$)
    {Hyperprior entropy model\\{\footnotesize hyper-enc.\ $\rightarrow$ hyper-latent}\\{\footnotesize hyper-dec.\ $\rightarrow$ Gaussian prior}};

\node[io, above=1.0cm of e1] (xin) {$X$};
\node[io, above=1.0cm of d4] (xout) {$\hat{X}$};

\node[draw=orange!70!black, very thick, fill=orange!12, rounded corners=1pt,
      minimum size=1.0cm, align=center, font=\footnotesize]
      (w) at ($(mid)+(0,3.8cm)$) {importance\\map $W$};

\draw[flow] (xin) -- (e1);
\draw[flow] (e1) -- (e2);
\draw[flow] (e2) -- (e3);
\draw[flow] (e3) -- (e4);
\draw[flow] (d1) -- (d2);
\draw[flow] (d2) -- (d3);
\draw[flow] (d3) -- (d4);
\draw[flow] (d4) -- (xout);

\draw[eflow] (e4.south) |- (ent.west);
\draw[eflow] (ent.east) -| (d1.south);

\node[font=\footnotesize, anchor=west] at ($(e4.south)+(0.12cm,-0.42cm)$) {$Z$};
\node[font=\footnotesize, anchor=west] at ($(d1.south)+(0.12cm,-0.42cm)$) {$\hat{Z}$};

\node[font=\footnotesize, text=black!60, anchor=east] at ($(d1.south)+(-0.12cm,-0.42cm)$) {$+$ pos.\ enc.};

\draw[wflow] (w.west) -| ([xshift=0.6cm]e2.east) -- (e2.east);
\draw[wflow] (w.west) -| ([xshift=0.6cm]e4.east) -- (e4.east);
\draw[wflow] (w.east) -| ([xshift=-0.6cm]d1.west) -- (d1.west);

\begin{scope}[on background layer]
    \node[draw=black!25, rounded corners=3pt, dashed, inner sep=0.34cm,
          fit=(e1)(e2)(e3)(e4),
          label={[black!55,font=\footnotesize\bfseries,anchor=south west]north west:Encoder}] {};
    \node[draw=black!25, rounded corners=3pt, dashed, inner sep=0.34cm,
          fit=(d1)(d2)(d3)(d4),
          label={[black!55,font=\footnotesize\bfseries,anchor=south west]north west:Decoder}] {};
\end{scope}

\end{tikzpicture}}
    \caption{End-to-end UniTAC codec. The orange path marks the importance map $W$ conditioning the encoder and decoder attention stages.}
    \label{fig:architecture}
\end{figure}

\subsection{Tokenization and transformer backbone}

The input image $X\in\mathbb{R}^{3\times H\times W}$ is split into non-overlapping $p\times p$ patches ($p=4$) by a strided 2D convolution, producing a token grid of size $\tfrac{H}{p}\times\tfrac{W}{p}$ with an embedding width of $C_0=96$. These tokens are the components $\{x_i\}$ of the framework, so the weighted distortion is measured on spatial patches, and the importance map is aligned with this grid. Every transformer block follows the standard pre-norm layout (i.e., layer normalization before each sub-layer, a residual connection around each sub-layer, and a two-layer GELU MLP whose hidden dimension is $4\times$ the token width), and all attention uses $4$ heads. Each stage below is a stack of $D=4$ such blocks.

The encoder's first stage uses \emph{neighborhood-attention} (NAT) blocks. Full self-attention costs grow quadratically with the number of tokens, making it prohibitive to apply directly on the encoder's high-resolution grid. This stage, therefore, restricts each token to attend only within a local $k\times k$ window ($k=3$), making attention linear in the number of tokens, with a learnable relative-position bias that lets the block weight neighbors by their spatial offset. Cyclically increasing dilation $\{1,2,4\}$ enlarges the receptive field as blocks stack while preserving this linear cost. A patch-merging layer then halves each spatial dimension (merging each $2\times2$ block of tokens into one) and doubles the width to $C_1=2C_0=192$. This bottleneck grid holds fewer tokens, and the second stage performs \emph{sparse global-attention} (SGAT) on it. Each SGAT block first performs sparse global attention over an importance-selected memory of tokens (Section~\ref{sec:cond}) and then a full self-attention over all tokens to restore global consistency. A linear projection maps the bottleneck tokens to a latent tensor $Z\in\mathbb{R}^{C_z\times \frac{H}{2p}\times\frac{W}{2p}}$ with $C_z=48$.

The decoder mirrors this pipeline in reverse: a stack of $D$ SGAT blocks on the bottleneck grid, a patch-expanding layer that doubles each spatial dimension, and a stack of $D$ NAT refinement blocks, ending with a sigmoid that maps the recovered tokens to a bounded reconstruction $\hat{X}\in[0,1]^{3\times H\times W}$. A 2D sinusoidal positional encoding is added at the decoder input, as its bottleneck stage would otherwise begin without any spatial reference.

\subsection{Token-level weight conditioning}\label{sec:cond}

The importance map is supplied as a low-resolution grid $W\in\mathbb{R}_+^{G\times G}$ (with $G=16$), normalized to unit mean, and bilinearly resized to each stage's token resolution. UniTAC consumes it through two complementary mechanisms that together let important tokens \emph{retain} their own detail while unimportant tokens \emph{borrow} detail from important regions, so that latent capacity concentrates where the task needs it (Fig.~\ref{fig:conditioning}).

\emph{Local stage (neighborhood attention).} Importance enters this stage through two mechanisms (Fig.~\ref{fig:conditioning}a). The first controls \emph{where} each token attends: within its $k\times k$ window, a token forms a standard attention score toward each neighbor $j$ (the scaled query--key dot product plus a learnable relative-position bias, as in base ViT), and we add an importance bonus $s\,w_j$ to it (learnable scale $s$), so every token attends more strongly toward important neighbors. The second controls \emph{how much} each token updates: a learnable self-gate $\gamma_i$ that grows with the token's own importance $w_i$ blends its current content with the attention output, $x_i\leftarrow x_i+(1-\gamma_i)\,[\mathrm{attn}(X)]_i$. An important token ($\gamma_i\!\to\!1$) keeps its own detail, while an unimportant one ($\gamma_i\!\to\!0$) fully absorbs content attended from its neighborhood. On the decoder side, the mirrored neighborhood-attention stage runs \emph{unconditioned}, without the importance bonus or self-gate: the rate has already been allocated at the encoder, so this stage only spatially refines the reconstruction.

\emph{Bottleneck stage (sparse global attention).} At the bottleneck, each query \emph{independently} samples its own memory of $T$ tokens ($T=24$), drawing token $j$ with probability $\propto w_j^{1/\tau}$, and attends only over that set (if the grid holds at most $T$ tokens, every query attends densely to all of them). The temperature $\tau$ follows a cosine schedule from $\tau=1$ (sampling proportional to $w_j$; exploratory) toward a small $\tau$ (a near-deterministic draw on the highest-importance tokens; late training and inference). A learnable gate $g_i=g_{\min}+(g_{\max}-g_{\min})\,\sigma(\beta'(w_i-1))$, with learnable slope $\beta'$ and bounds $g_{\min}\in[0,\tfrac12]$, $g_{\max}\in[\tfrac12,1]$, mixes each token's self-representation with the content read from this memory, and a following full self-attention restores global consistency. Unlike the local stage, the decoder's mirrored bottleneck (Fig.~\ref{fig:conditioning}b) \emph{retains} conditioning under $W$.

\begin{figure}[t]
    \centering
    \begin{minipage}{0.49\linewidth}
        \centering
        \resizebox{\linewidth}{!}{\begin{tikzpicture}[
    font=\normalsize,
    >={Latex[length=2.0mm]},
    cell/.style={draw=black!25, minimum size=1.0cm, inner sep=0pt},
]
\useasboundingbox (-1.3,-1.9) rectangle (6.3,6.6);

\def\cx{3.6}\def\cy{3.6}\def\sig{1.7}

\foreach \i in {0,...,5}{
  \foreach \j in {0,...,5}{
    \pgfmathsetmacro{\v}{exp(-(((\i-\cx)^2+(\j-\cy)^2)/(2*\sig*\sig)))}
    \pgfmathsetmacro{\fl}{12+72*\v}
    \node[cell, fill=orange!\fl] (n\i\j) at (\i*1.0,\j*1.0) {};
  }
}

\begin{scope}[on background layer]
  \node[draw=blue!60!black, very thick, rounded corners=2pt,
        fit=(n00)(n22), inner sep=1.6pt] {};
\end{scope}

\foreach \a/\b in {0/0,1/0,2/0,0/1,2/1,0/2,1/2,2/2}{
  \pgfmathsetmacro{\v}{exp(-(((\a-\cx)^2+(\b-\cy)^2)/(2*\sig*\sig)))}
  \pgfmathsetmacro{\tk}{0.4+2.6*\v}
  \draw[->, blue!55!black, line width=\tk pt] (n\a\b) -- (n11);
}
\node[cell, draw=blue!65!black, very thick, fill=none] at (n11) {};
\node[below=1.5pt of n11, blue!55!black, font=\small, align=center]
     {query\\ $\gamma_i\!\to\!0$: absorb};

\node[cell, draw=red!70!black, very thick, fill=none] (qa) at (n44) {};
\draw[->, red!70!black, thick]
      (qa.north) .. controls +(0.55,0.75) and +(-0.55,0.75) .. (qa.north west);
\node[above=1.5pt of qa, red!70!black, font=\small, align=center]
     {query \; $\gamma_i\!\to\!1$: preserve};

\end{tikzpicture}}\vspace{-0.4cm}\\
        {\footnotesize (a) Neighborhood attention.}
    \end{minipage}\hfill
    \begin{minipage}{0.49\linewidth}
        \centering
        \resizebox{\linewidth}{!}{\begin{tikzpicture}[
    font=\normalsize,
    >={Latex[length=2.0mm]},
    cell/.style={draw=black!25, minimum size=1.0cm, inner sep=0pt},
    mem/.style={draw=orange!80!black, line width=1.4pt},
]
\useasboundingbox (-1.3,-1.9) rectangle (6.3,6.6);

\def\cx{3.5}\def\cy{3.4}\def\sig{1.5}

\foreach \i in {0,...,5}{
  \foreach \j in {0,...,5}{
    \pgfmathsetmacro{\v}{exp(-(((\i-\cx)^2+(\j-\cy)^2)/(2*\sig*\sig)))}
    \pgfmathsetmacro{\fl}{12+72*\v}
    \node[cell, fill=orange!\fl] (n\i\j) at (\i*1.0,\j*1.0) {};
  }
}

\foreach \a/\b in {3/3,4/3,3/4,4/4,2/4,4/2}{
  \node[cell, mem, fill=none] (m\a\b) at (n\a\b) {};
}
\node[anchor=south, font=\small, orange!70!black] at (n44.north) {$T$-token memory};

\node[cell, draw=blue!65!black, very thick, fill=none] (q) at (n00) {};
\node[below=1.5pt of q, blue!55!black, font=\small, align=center]
     {query\\ (unimportant)};
\foreach \a/\b in {3/3,4/3,3/4,4/4,2/4,4/2}{
  \draw[->, blue!55!black, thick, opacity=0.85] (q) -- (m\a\b);
}

\end{tikzpicture}}\vspace{-0.4cm}\\
        {\footnotesize (b) Sparse global attention.}
    \end{minipage}
    \caption{Token-level weight conditioning on a sample token grid. Higher color intensity marks more important tokens (larger $w_i$). \textbf{(a) Local stage:} each token attends to its $k\!\times\!k$ neighborhood, biased toward important neighbors. \textbf{(b) Bottleneck stage:} each query attends to an importance-sampled memory of $T$ tokens. In both stages an importance gate lets unimportant tokens borrow detail from important ones.}
    \label{fig:conditioning}
\end{figure}

\subsection{Entropy coding and rate estimation}

The latent $Z$ is compressed with a hyperprior entropy model~\cite{balle2018variational}. A hyper-encoder maps $Z$ to a hyper-latent $Y$, coded under a factorized prior,\footnote{The hyper-encoder is a convolutional network that spatially downsamples $Z$ by $4\times$ (two stride-$2$ convolutions with LeakyReLU), and the factorized prior on $Y$ is a per-channel Gaussian with learnable mean and scale.} from which a hyper-decoder predicts per-element Gaussian parameters $(\mu,\sigma)$ for $Z$. Here, $Y$ is transmitted as side information; both encoder and decoder then derive the same $(\mu,\sigma)$ by running the hyper-decoder on it. Quantization is simulated during training by additive uniform noise and replaced by rounding at inference, with arithmetic coding producing the actual bitstream. The expected code length of the quantized latents $\hat{Z},\hat{Y}$ is estimated differentiably under the conditional model $p(\hat{Z}\mid\mu,\sigma)$ and the factorized prior $p(\hat{Y})$ as
\begin{equation}\label{eq:bpp}
    \text{bpp}=\frac{1}{HW}\Big(\textstyle\sum -\log_2 p(\hat{Z}\mid\mu,\sigma)+\sum -\log_2 p(\hat{Y})\Big),
\end{equation}
which serves as the rate term in the training objective and as the reported rate at test time. 

The importance map $W$ must also reach the decoder and is likewise transmitted as side information. As it is a small fixed-size $G\times G$ grid ($G=16$), independent of image resolution and identical for every codec we compare, its overhead is a negligible, common additive constant. We therefore omit it from the rate objective and the reported bpp.

\subsection{Training objective and randomized importance maps}

The codec is trained end to end to minimize a rate–distortion objective that pairs the rate estimate \eqref{eq:bpp} with the separable weighted distortion of \eqref{eq:separable_w_distortion}, normalized by the total weight,
\begin{equation}\label{eq:rdloss}
    \mathcal{L}=\underbrace{\frac{\sum_{i} w_i\,(x_i-\hat{x}_i)^2}{\sum_i w_i}}_{\text{weighted distortion}}+\;\lambda\,\text{bpp},
\end{equation}
where the per-patch squared error is weighted by the same importance map $W$ that conditions the network, and $\lambda$ trades off rate against task fidelity. To obtain a \emph{universal} weight-conditioned codec rather than one specialized for a single task, we do not fix $W$ during training. In fact, each image is paired with a freshly sampled importance map from a broad synthetic family (random mixtures of Gaussian ``blobs'' spanning from peaky to diffuse regimes, varied in count, location, and scale; Fig.~\ref{fig:blobs}), exposing the backbone to a rich range of importance patterns so that it responds correctly to any weighting supplied at test time. At inference, the synthetic map is replaced by a task-derived importance map, computed here via integrated gradients of a downstream classifier, so that the identical backbone is specialized to the active task. Fig.~\ref{fig:allocation} illustrates the resulting task-driven reconstructions for the same image and codec under two different importance maps.

\begin{figure}[t]
    \centering
    \includegraphics[width=0.46\linewidth]{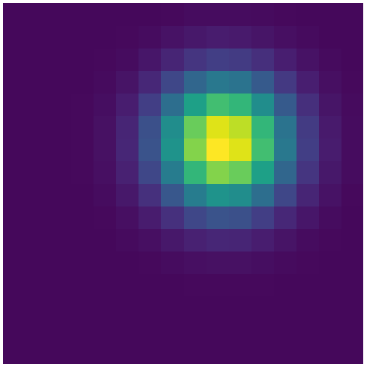}\hfill
    \includegraphics[width=0.46\linewidth]{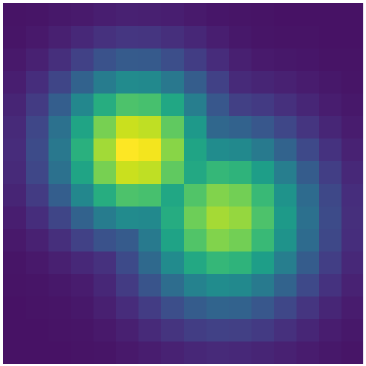}
    \caption{Synthetic training importance maps (mean-normalized; brighter denotes larger $w_i$): a single blob (left) and two overlapping blobs (right).}
    \label{fig:blobs}
\end{figure}

\begin{figure}[t]
    \centering
    \includegraphics[width=0.95\linewidth]{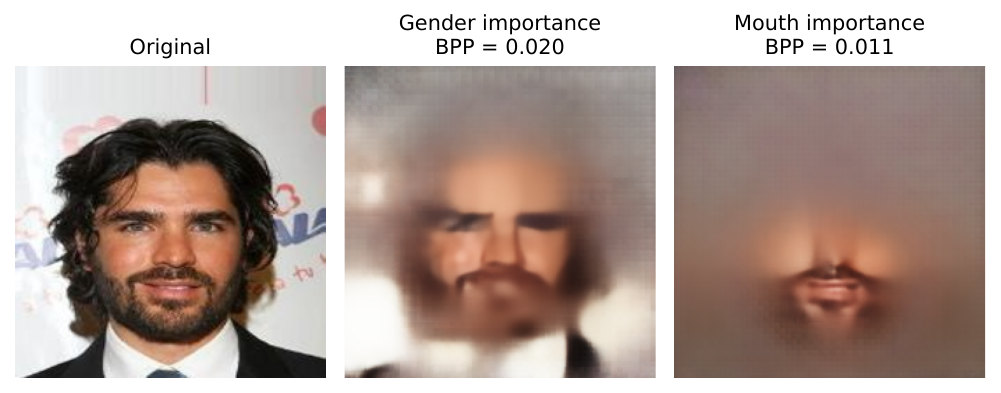}
    \caption{Task-driven bit allocation for a fixed image (single UniTAC backbone, $\lambda=0.10$). Left: original; middle/right: reconstructions conditioned on the gender and mouth integrated-gradients maps. At nearly equal rate, the same backbone shifts fidelity toward each task's important region without retraining.}
    \label{fig:allocation}
\end{figure}

\section{Experimental Evaluation}\label{sec:eval}

We evaluate whether a single weight-conditioned UniTAC backbone can, without retraining, match task-specific codecs while surpassing task-agnostic ones. On two downstream face-analysis tasks, we report signal fidelity (PSNR, semantic PSNR) and task fidelity (classification accuracy) versus rate.

\subsection{Experimental setup}\label{sec:eval_setup}

\paragraph{Data} All codecs are trained on \textbf{AffectNet}~\cite{mollahosseini2017affectnet} and evaluated on the test split of \textbf{CelebA}~\cite{liu2015faceattributes}; the CelebA training split is used only to train the classifiers that provide the downstream accuracy metric and the attribute-based importance maps. Both at training and test time, each image is cropped to a square and resized to a common size drawn from a range of scales ($64\times64$ to $256\times256$ in steps of $32$), so that the codec sees images at multiple resolutions.

\paragraph{Tasks} We consider two binary CelebA attribute classifications with different spatial support: \emph{gender} (the \texttt{Male} attribute, spatially global) and \emph{mouth state} (the \texttt{Mouth\_Slightly\_Open} attribute, spatially localized). For each task, we fine-tune a separate ImageNet-pretrained ResNet-18~\cite{he2016resnet} classifier on clean CelebA images (resized to $224\times224$ and standardized with ImageNet statistics), reaching Top-1 accuracies of $98.5\%$ (gender) and $94.3\%$ (mouth) on uncompressed images. Each classifier is used both to derive its task importance map via integrated gradients and to measure task fidelity on the reconstructions.

\paragraph{Importance maps} Each task's importance map $W$ is derived from its classifier by integrated gradients of the top-predicted-class logit, using a black (all-zero) baseline and $8$ steps along the path from the baseline to the image. The per-pixel attributions are reduced to a scalar saliency by taking the absolute value and averaging over color channels, then average-pooled to the $G\times G$ conditioning grid, floored at a small value ($0.02$), and mean-normalized. This single map serves two roles: it conditions the codec's encoder and decoder and defines the task weighting of the semantic-PSNR metric.

\paragraph{Codecs} We compare three codecs, mirroring Fig.~\ref{fig:paradigms}:
\begin{itemize}
    \item \textbf{UniTAC (ours):} the \emph{single} weight-conditioned backbone, trained once on randomized synthetic maps and specialized at test time by injecting each task's importance map.
    \item \textbf{Non-semantic (universal):} a task-agnostic codec that allocates rate evenly.
    \item \textbf{Task-based (semantic):} a separate codec trained end-to-end for each task's importance map (one for gender, one for mouth).
\end{itemize}
All three share the same ViT-based~\cite{dosovitskiy2021vit} architecture and are trained at various rate--distortion trade-offs.

\paragraph{Metrics} Against the measured bitrate (bits per pixel), we report three quantities: (i) \emph{overall PSNR}, full-image reconstruction fidelity; (ii) \emph{semantic PSNR}, reconstruction error reweighted by the task importance map, measuring fidelity where the task-relevant regions lie; and (iii) \emph{downstream Top-1 accuracy}, the task classifier evaluated on the decoded images.

\subsection{Rate--distortion: overall and semantic PSNR}\label{sec:eval_rd}

\begin{figure*}[t]
    \centering
    \includegraphics[width=\textwidth]{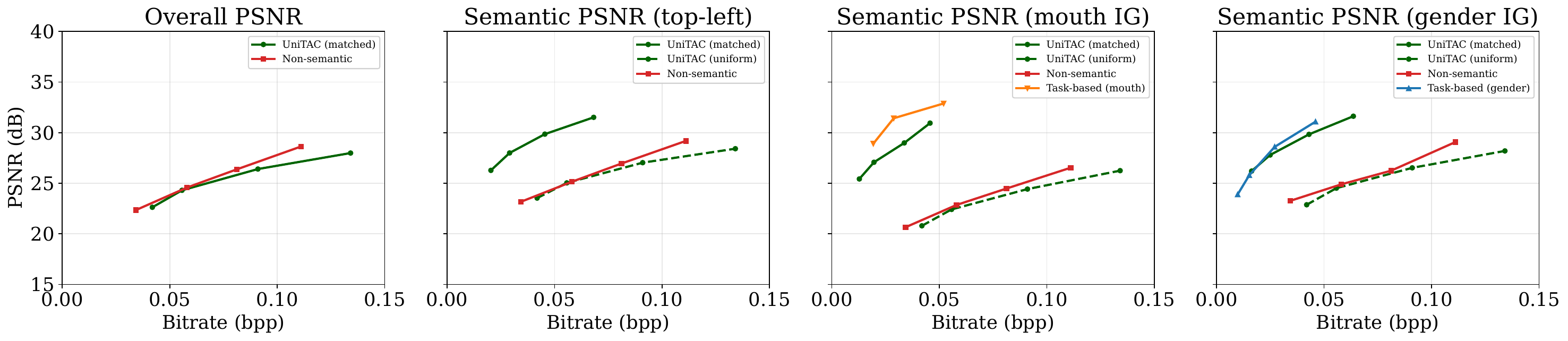}\vspace{-0.3cm}
    \caption{Rate--distortion on CelebA. Left to right: overall PSNR, and semantic PSNR under a synthetic top-left ROI, the mouth map, and the gender map.}
    \label{fig:rd}
\end{figure*}
Fig.~\ref{fig:rd} reports rate--distortion curves. On \emph{overall} PSNR (left), UniTAC under a uniform map tracks the non-semantic codec closely, confirming that weight conditioning does not sacrifice generic reconstruction quality when no task is specified. On \emph{semantic} PSNR scored under a task's importance map (middle and right), injecting the \emph{matched} map at encode time yields a large gain over both the uniform-map operating point and the non-semantic codec at equal rate: at a comparable rate ($\approx 0.04$~bpp), it improves task-region PSNR by roughly $7$~dB (gender) to $10$~dB (mouth) over uniform encoding, as rate concentrates on the task-relevant tokens. Crucially, this single UniTAC backbone nearly matches the ``task-based'' codec that is trained exclusively for that one task, with no retraining.

\subsection{Downstream task accuracy}\label{sec:eval_acc}

Fig.~\ref{fig:acc} reports downstream accuracy versus rate. For both tasks, at a comparable rate the task-matched map preserves accuracy far better than either the uniform-map operating point or the non-semantic codec. On the mouth task at $\approx\!0.034$~bpp, it retains $91.4\%$ accuracy, versus $76.9\%$ for the non-semantic codec at the same rate and $71.3\%$ under a uniform map ($0.042$~bpp). On the gender task at $\approx\!0.043$~bpp, the same backbone re-conditioned on the gender map retains $92.2\%$, versus $85.3\%$ under a uniform map ($0.042$~bpp). The smaller margin reflects the spatially global support of the gender attribute. In both cases, UniTAC's task-matched accuracy closely matches the dedicated per-task codec at equal rate with a single shared model.
\begin{figure}[t]
    \centering
    \includegraphics[width=\linewidth]{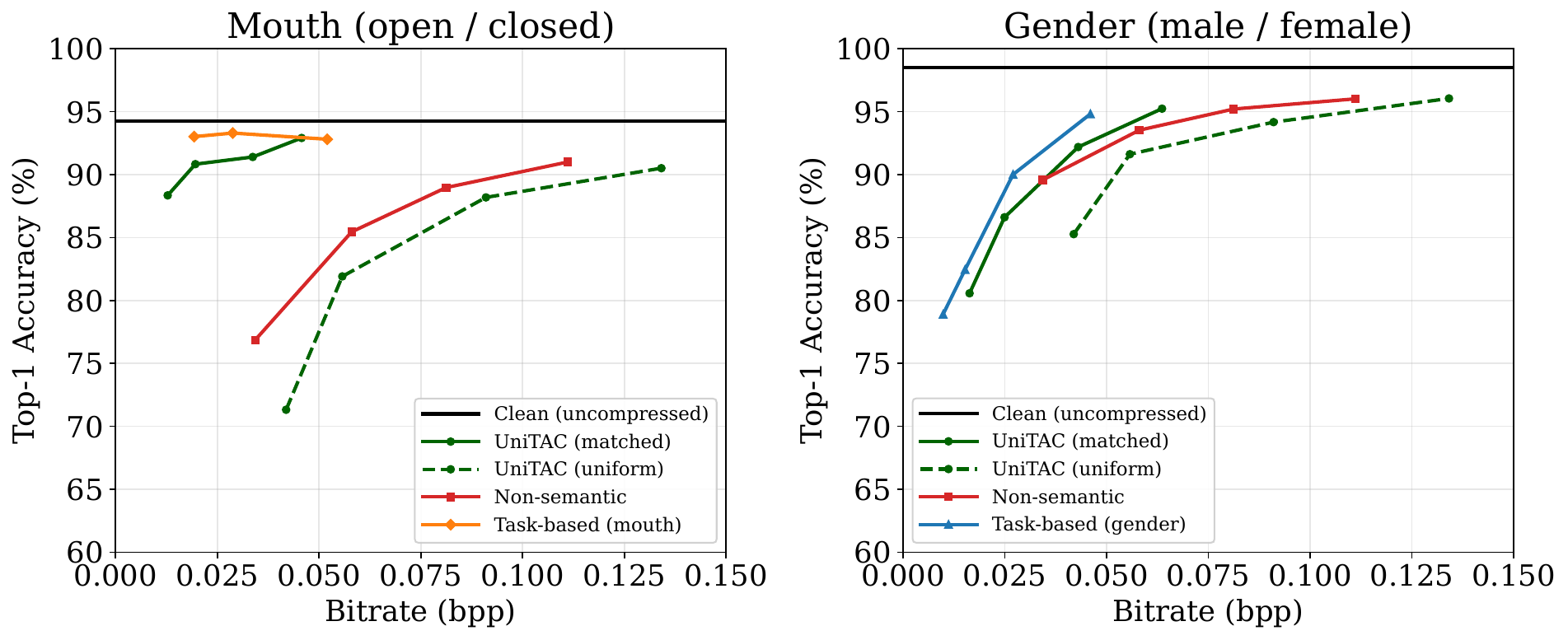}\vspace{-0.3cm}
    \caption{Downstream accuracy vs.\ rate on CelebA.}
    \label{fig:acc}
\end{figure}

\section{Additional Theoretical Properties}\label{sec:structural}
We close with two further consequences of task structure for the optimizing channels and the corresponding task-consistent weight maps. These properties are not required to instantiate the UniTAC codec of Section~\ref{sec:design}, but further validate the weighted-distortion framework of Section~\ref{sec:framework}. Under suitable conditions, we show that task \emph{symmetry} forces the weights to inherit the same symmetry, reducing to equal weights when they are sample-independent, while the task \emph{irrelevance} of a component forces its weight to zero.

We use $\pi$ to denote a permutation, i.e., a bijection from $[n]$ to $[n]$, where $[n] \triangleq \{1, \ldots, n \}$. We also write 
$\pi X \coloneqq \left(x_{\pi^{-1}(1)}, \ldots, x_{\pi^{-1}(n)}  \right)$. 
For a weight map \(W:\mathbb R^n\to\mathbb R_+^n\), define $ W^\pi(X) \triangleq
\pi\!\left(W(\pi^{-1}X)\right)$. Then $W$ is \emph{permutation-equivariant} if 
$$W^\pi(X) = W(X)$$
for every permutation $\pi$ and every \(X\in\mathbb R^n\). We say that $W$ is \(P_{\boldsymbol{X} }\)-almost surely permutation-equivariant if
\[
P_{\boldsymbol{X}}\left(
    \left\{
    X\in\mathbb R^n:
    W^\pi(X)=W(X)
    \text{ for every }\pi
    \right\}
\right)=1.
\]

Under the symmetry of the task function, we show that task-consistency of a weight map $W$ is preserved under the transformation $W\mapsto W^\pi$. We further show that, under a suitable uniqueness condition, task-consistent weight maps are permutation-equivariant. In the sample-independent case, this implies equal weights. 
 
\begin{prop}[Symmetry forces symmetric weights]\label{prop:symmetry}
    Suppose the source $\boldsymbol{X}=(\boldsymbol{x}_1,\dots,\boldsymbol{x}_n)$ is exchangeable (its components are statistically interchangeable, e.g., i.i.d.) and the task $f:\mathbb{R}^n\rightarrow\mathbb{R}^m$ is permutation-invariant, i.e., $f(\pi X)=f(X)$ for every permutation $\pi$ and every $X$ (e.g., a set function). If a weight map $W:\mathbb{R}^n\to\mathbb{R}_+^n$ is task-consistent, then the weight map $W^\pi$ is also task-consistent. If, in addition, any two task-consistent weight maps are equal $P_{\boldsymbol X}$-almost surely up to a positive scaling factor, then $W$ is $P_{\boldsymbol{X}}$-almost surely permutation-equivariant. Consequently, any sample-independent task-consistent weight map satisfies $w_1=\cdots=w_n$.
\end{prop}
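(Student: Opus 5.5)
The plan is to build a permutation action on test channels and show that it preserves the feasible set, the task loss, and the weighted distortion, up to relabeling the weight map. For a channel $P$ and a permutation $\pi$, define the conjugated channel
\[
P^\pi(\hat X\mid X)\triangleq P(\pi^{-1}\hat X\mid \pi^{-1}X).
\]
Operationally, $P^\pi$ un-permutes the input, applies $P$, and re-permutes the output. This map is a bijection on channels, with inverse $P\mapsto P^{\pi^{-1}}$.

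\emph{Rate invariance.} Since $\boldsymbol X$ is exchangeable, $\pi^{-1}\boldsymbol X$ has the same law as $\boldsymbol X$. Under $P^\pi$, the pair $(\boldsymbol X,\hat{\boldsymbol X})$ therefore has the law of $(\pi\boldsymbol X',\pi\hat{\boldsymbol X}')$, where $(\boldsymbol X',\hat{\boldsymbol X}')\sim P_{\boldsymbol X}\times P$. Mutual information is invariant under bijections applied to each argument, so $P^\pi\in\mathcal P(R)$ if and only if $P\in\mathcal P(R)$.

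\emph{Task-loss invariance.} By the same change of variables and permutation invariance of $f$,
\[
\mathcal L_\text{task}(P^\pi)=\mathbb E\|f(\pi\boldsymbol X')-f(\pi\hat{\boldsymbol X}')\|^2=\mathcal L_\text{task}(P).
\]

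\emph{Distortion transformation.} For the weighted distortion, the change of variables gives
\[
D_{W^\pi}(P^\pi)=\mathbb E\sum_i [\pi W(\boldsymbol X')]_i\,\|(\pi\boldsymbol X')_i-(\pi\hat{\boldsymbol X}')_i\|^2 .
\]
Reindexing by $\pi^{-1}(i)$ shows this equals $D_W(P)$. The bookkeeping of $\pi$ versus $\pi^{-1}$ is where care is needed, and it is the reason $W^\pi$ is defined as $\pi(W(\pi^{-1}X))$.

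\emph{First claim.} Combining these facts, $P\mapsto P^\pi$ maps $\argmin_{\mathcal P(R)}D_W$ bijectively onto $\argmin_{\mathcal P(R)}D_{W^\pi}$, and it maps $\argmin_{\mathcal P(R)}\mathcal L_\text{task}$ onto itself. Take any minimizer $Q$ of $D_{W^\pi}$. Then $Q=P^\pi$ for some minimizer $P$ of $D_W$. By task consistency of $W$, $P$ minimizes $\mathcal L_\text{task}$, hence so does $Q$. This shows $W^\pi$ is task-consistent.

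\emph{Second claim.} Under the uniqueness hypothesis, for each $\pi$ there is a constant $c_\pi>0$ with $W^\pi=c_\pi W$ almost surely. The main obstacle is to force $c_\pi=1$. I would first note that $W^\pi(X)$ is a coordinate permutation of $W(\pi^{-1}X)$, so its coordinate sum is $\sum_i w_i(\pi^{-1}X)$. Taking expectations and using exchangeability,
\[
\mathbb E\textstyle\sum_i w_i^\pi(\boldsymbol X)=\mathbb E\sum_i w_i(\boldsymbol X).
\]
This gives $c_\pi=1$ whenever this expectation is finite and nonzero. If the expectation is infinite or zero, I would instead use a normalized functional, such as the median of $\sum_i w_i$, or interpret ``up to scaling'' as applying after normalizing the weights. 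Since there are finitely many permutations, taking the intersection of the corresponding full-measure sets yields almost-sure equivariance simultaneously for every $\pi$.

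\emph{Sample-independent case.} If $W\equiv w$ is constant, then $W^\pi=\pi w$. The argument above (the coordinate sum is preserved) gives $\pi w=w$ for every $\pi$. Applying this to all transpositions yields $w_1=\cdots=w_n$. Here the scaling issue is trivial, because $\pi w=c\,w$ with equal coordinate sums forces $c=1$ unless $w=0$, and the zero vector is already constant.
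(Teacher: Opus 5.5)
Your first claim follows the paper's proof: the conjugated channel $P^\pi$, exchangeability to match the law under $P_{\boldsymbol X}\times P^\pi$ with that of $(\pi\boldsymbol X,\pi\hat{\boldsymbol X})$, invariance of rate and task loss, $D_{W^\pi}(P^\pi)=D_W(P)$, and the bijection of argmin sets. The sample-independent case is also fine, because there the coordinate sum is a finite constant. \textbf{The gap is in the second claim, at the point you flagged.} The identity $c_\pi\,\mathbb{E}\sum_i w_i(\boldsymbol X)=\mathbb{E}\sum_i w_i(\boldsymbol X)$ determines $c_\pi$ only when this expectation is finite, and nothing in the hypotheses makes a weight map integrable. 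Neither fallback closes this. The median of $s(\boldsymbol X)\triangleq\sum_i w_i(\boldsymbol X)$ can be $0$ even though $W$ is not almost surely zero, e.g., if $W$ vanishes on an event of probability $0.6$; then the median does not pin $c_\pi$. Normalizing the weights before comparing them replaces the stated uniqueness hypothesis with a different one. The zero-expectation case, by contrast, is no obstacle. There $W=0$ almost surely, and since $\pi^{-1}\boldsymbol X\sim P_{\boldsymbol X}$, also $W^\pi(\boldsymbol X)=\pi W(\pi^{-1}\boldsymbol X)=0$ almost surely, so equivariance holds trivially.

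The missing idea is that no moments are needed. The paper uses the finite order of $\pi$. Because $\pi^{-j}\boldsymbol X\sim P_{\boldsymbol X}$, the almost-sure identity $W^\pi=c_\pi W$ can be evaluated at $\pi^{-j}\boldsymbol X$, which by induction gives $W^{\pi^j}=c_\pi^jW$ almost surely. Choosing $k$ with $\pi^k=\mathrm{id}$ yields $W=c_\pi^kW$, hence $c_\pi=1$ once $W$ is known not to be almost surely zero. The paper gets that from uniqueness: a task-consistent zero map would make every channel in $\mathcal{P}(R)$ task-optimal, hence every weight map task-consistent. Alternatively, you can make your own distributional idea moment-free. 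From $s(\pi^{-1}\boldsymbol X)=c_\pi s(\boldsymbol X)$ almost surely and exchangeability, $c_\pi s(\boldsymbol X)$ and $s(\boldsymbol X)$ have the same law. Take the bounded, strictly increasing $h(t)=t/(1+t)$; the finite expectations of $h(c_\pi s)$ and $h(s)$ must then agree. If $c_\pi\neq 1$, the integrand $h(c_\pi s)-h(s)$ has constant sign and is nonzero on $\{s>0\}$, so this forces $c_\pi=1$ unless $s=0$ almost surely.
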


The next result shows that task irrelevance constrains the information carried by an optimal channel. Specifically, an optimal channel can be reduced to the task-relevant coordinates, and when the task loss-rate function is strictly decreasing, no task-optimal channel conveys additional information about the task-irrelevant source through the task-relevant reconstruction beyond what is already available from the task-relevant source (\eqref{eq:no-nuisance-in-relevant-reconstruction}), nor carries incremental rate in the task-irrelevant reconstruction beyond what is already carried by the task-relevant reconstruction (\eqref{eq:no-incremental-ignored-rate}).

\begin{prop}[Task irrelevance eliminates task-irrelevant incremental rate]
Suppose the task ignores a subset $\mathcal{S}\subseteq[n]$, i.e.,
$f(X)=f(X')$ whenever
$x_{\mathcal{S}^c}=x'_{\mathcal{S}^c}$, and let
$\mathcal{T}\triangleq\mathcal{S}^c$.
For any channel $P=P({\hat{X}}\mid {X})$, define the reduced channel
\begin{align}
    P^{\mathrm{red}}
    (\hat x_{\mathcal T},\hat x_{\mathcal S}\mid
    x_{\mathcal T},x_{\mathcal S})
    \triangleq
    P_{\mathcal T}(\hat x_{\mathcal T}\mid x_{\mathcal T})
    \delta_c(\hat x_{\mathcal S}),
\end{align}
where $P_{\mathcal T}$ denotes the conditional law induced by
$P_{\boldsymbol X}\times P$, and $\delta_c$ denotes the point mass at any fixed reconstruction value $c$.\\
Then
\begin{align}
    \mathcal{L}_{\mathrm{task}}(P^{\mathrm{red}})
    &=
    \mathcal{L}_{\mathrm{task}}(P),
    \label{eq:reduced-same-task-loss}
    \\
    I_{P^{\mathrm{red}}}
    (\boldsymbol{X};\boldsymbol{\hat{X}})
    &\leq
    I_P
    (\boldsymbol{X};\boldsymbol{\hat{X}}).
    \label{eq:reduced-rate}
\end{align}
Consequently, at every rate $R$, there exists a task-loss minimizing
channel in $\mathcal{P}(R)$ of the reduced form above. Furthermore, 
if $L^\star(R)
    \triangleq
    \min_{P\in\mathcal{P}(R)}
    \mathcal{L}_{\mathrm{task}}(P)$
is strictly decreasing at $R$, then every task-optimal channel $P^\star
    \in
    \argmin_{P\in\mathcal{P}(R)}
    \mathcal{L}_{\mathrm{task}}(P)$
allocates all of its rate to the task-relevant coordinates, i.e., 
satisfies
\begin{align}
    I_{P^\star}
    (\boldsymbol{X};\boldsymbol{\hat{X}})
    =
    I_{P^\star}
    (\boldsymbol{X};
    \boldsymbol{\hat{x}}_{\mathcal{T}})
    =
    I_{P^\star}
    (\boldsymbol{x}_{\mathcal{T}};
    \boldsymbol{\hat{x}}_{\mathcal{T}})
    =R,
    \label{eq:all-rate-is-relevant}
\end{align}
and hence
\begin{align}
    I_{P^\star}
    (\boldsymbol{x}_{\mathcal{S}};
    \boldsymbol{\hat{x}}_{\mathcal{T}}
    \mid\boldsymbol{x}_{\mathcal{T}})
    &=0,
    \label{eq:no-nuisance-in-relevant-reconstruction}
    \\
    I_{P^\star}
    (\boldsymbol{X};
    \boldsymbol{\hat{x}}_{\mathcal{S}}
    \mid\boldsymbol{\hat{x}}_{\mathcal{T}})
    &=0.
    \label{eq:no-incremental-ignored-rate}
\end{align}
In particular, if $W$ is task-consistent at rate $R$, then every
minimizer of $D_W$ over $\mathcal{P}(R)$ satisfies
\eqref{eq:all-rate-is-relevant}--\eqref{eq:no-incremental-ignored-rate}.
\label{my_prop..}
\end{prop}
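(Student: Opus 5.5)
The plan is to work with the joint law of $(\boldsymbol{X},\boldsymbol{\hat X})$ under $P_{\boldsymbol X}\times P$ and to compare it with the joint law under $P^{\mathrm{red}}$.

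\textbf{Step 1: equal task loss.} Since $f$ ignores $\mathcal S$, we can write $f(X)=\tilde f(x_{\mathcal T})$. Likewise $f(\hat X)=\tilde f(\hat x_{\mathcal T})$, because the statement quantifies over all arguments of $f$, so the rule applies to reconstructions too. Hence $\mathcal L_{\mathrm{task}}$ depends only on the joint law of $(\boldsymbol x_{\mathcal T},\boldsymbol{\hat x}_{\mathcal T})$. Under $P^{\mathrm{red}}$ this pair has law $P_{\boldsymbol x_{\mathcal T}}\times P_{\mathcal T}$, which is by construction the $(\mathcal T,\mathcal T)$-marginal of $P_{\boldsymbol X}\times P$. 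This gives \eqref{eq:reduced-same-task-loss}.

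\textbf{Step 2: the rate inequality.} Under $P^{\mathrm{red}}$, $\boldsymbol{\hat x}_{\mathcal S}$ is constant and $\boldsymbol{\hat x}_{\mathcal T}$ depends on $\boldsymbol X$ only through $\boldsymbol x_{\mathcal T}$. Therefore
\[
I_{P^{\mathrm{red}}}(\boldsymbol X;\boldsymbol{\hat X})=I_{P^{\mathrm{red}}}(\boldsymbol x_{\mathcal T};\boldsymbol{\hat x}_{\mathcal T})=I_P(\boldsymbol x_{\mathcal T};\boldsymbol{\hat x}_{\mathcal T}),
\]
where the last equality holds because the $(\mathcal T,\mathcal T)$-marginals agree. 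Under $P$, the chain rule and monotonicity give
\[
I_P(\boldsymbol X;\boldsymbol{\hat X})=I_P(\boldsymbol X;\boldsymbol{\hat x}_{\mathcal T})+I_P(\boldsymbol X;\boldsymbol{\hat x}_{\mathcal S}\mid\boldsymbol{\hat x}_{\mathcal T}),
\]
\[
I_P(\boldsymbol X;\boldsymbol{\hat x}_{\mathcal T})=I_P(\boldsymbol x_{\mathcal T};\boldsymbol{\hat x}_{\mathcal T})+I_P(\boldsymbol x_{\mathcal S};\boldsymbol{\hat x}_{\mathcal T}\mid\boldsymbol x_{\mathcal T}).
\]
Since every term is nonnegative, \eqref{eq:reduced-rate} follows. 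Taking any task-optimal $P$ in $\mathcal P(R)$, its reduction lies in $\mathcal P(R)$ and has the same loss, so a reduced-form minimizer exists.

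\textbf{Step 3: strict monotonicity and the main obstacle.} Let $P^\star$ be task-optimal. The decomposition above shows
\[
R\ge I_{P^\star}(\boldsymbol X;\boldsymbol{\hat X})\ge I_{P^\star}(\boldsymbol X;\boldsymbol{\hat x}_{\mathcal T})\ge I_{P^\star}(\boldsymbol x_{\mathcal T};\boldsymbol{\hat x}_{\mathcal T})=:R'.
\]
It suffices to show $R'=R$; the two chain-rule identities then force both excess terms to vanish, which gives \eqref{eq:all-rate-is-relevant}--\eqref{eq:no-incremental-ignored-rate}. Suppose instead $R'<R$. Then $(P^\star)^{\mathrm{red}}\in\mathcal P(R')$ and attains loss $L^\star(R)$, so $L^\star(R')\le L^\star(R)$. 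On the other hand, $L^\star$ is nonincreasing because the feasible sets are nested, so $L^\star(R')\ge L^\star(R)$. Hence $L^\star(R')=L^\star(R)$ with $R'<R$, contradicting strict decrease at $R$.

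The main obstacle is pinning down what ``strictly decreasing at $R$'' means. I will read it as $L^\star(R')>L^\star(R)$ for all $R'<R$, which is exactly what the contradiction needs, and I will state this interpretation explicitly. A minor point is that if $\mathcal S$ reconstructions live in a continuum, $\delta_c$ is still a valid kernel; mutual informations remain well defined, so no extra care is needed.

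\textbf{Step 4: the final claim.} If $W$ is task-consistent, every minimizer of $D_W$ over $\mathcal P(R)$ lies in $\argmin_{P\in\mathcal P(R)}\mathcal L_{\mathrm{task}}$ by Definition~\ref{def:task_consistency}. Step 3 then applies to each such minimizer, which gives the last statement.
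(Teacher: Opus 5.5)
Your proposal is correct and follows essentially the same route as the paper. Both arguments use equality of the $(\mathcal T,\mathcal T)$-marginals for the loss and rate claims, derive a contradiction from $(P^\star)^{\mathrm{red}}\in\mathcal P(r)$ with $r<R$, and apply the chain-rule decomposition to make both excess mutual-information terms vanish; the only cosmetic difference is that you evaluate $L^\star$ at $r$ itself (using that $L^\star$ is nonincreasing), whereas the paper picks an intermediate $r'\in(r,R)$.
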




\begin{prop}[Zeroing decoupled task-irrelevant weights preserves task consistency]
Suppose the task ignores a subset
$\mathcal{S}\subseteq[n]$
and let
$\mathcal{T}:=\mathcal{S}^c$. Let $L^\star(R)$, defined in Proposition \ref{my_prop..}, be strictly decreasing. Let
$\boldsymbol{x}_{\mathcal{T}}$ and
$\boldsymbol{x}_{\mathcal{S}}$ be independent. Suppose that there exists a task-consistent weight map $W$ at rate $R$ such that $W$ is
block-separable in the sense that there exist nonnegative functions
$u_i$ and $v_i$, and
\begin{align}
    w_i(\boldsymbol{X})
    =
    \begin{cases}
        u_i(\boldsymbol{x}_{\mathcal{T}}),
        & i\in\mathcal{T},
        \\
        v_i(\boldsymbol{x}_{\mathcal{S}}),
        & i\in\mathcal{S},
    \end{cases}
    \quad
    P_{\boldsymbol{X}}\text{-almost surely}.
    \label{eq:block-separable-weights}
\end{align}
Define the weight map $W^{\mathrm{rel}}$ by
\begin{align}
    w_i^{\mathrm{rel}}(\boldsymbol{X})
    \coloneqq
    \begin{cases}
        w_i(\boldsymbol{X}),
        & i\in\mathcal{T},
        \\
        0,
        & i\in\mathcal{S}.
    \end{cases}
    \label{eq:truncated-weight-map}
\end{align}
Then, $W^{\mathrm{rel}}$ is task-consistent at rate $R$. Consequently, the existence of a block-separable task-consistent weight
map implies the existence of a task-consistent weight map that vanishes
on $\mathcal{S}$. If, in addition, any two task-consistent weight maps are equal
$P_{\boldsymbol X}$-almost surely up to a positive scaling factor, then every task-consistent weight map vanishes on $\mathcal S$ almost surely.
\label{zero_prop}
\end{prop}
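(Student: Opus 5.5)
The plan is to exploit the additive structure $D_W(P)=D_{\mathcal T}(P)+D_{\mathcal S}(P)$, where
\[
D_{\mathcal T}(P)\triangleq\mathbb{E}\Big[\textstyle\sum_{i\in\mathcal T}u_i(\boldsymbol x_{\mathcal T})(\boldsymbol x_i-\hat{\boldsymbol x}_i)^2\Big]
\]
and $D_{\mathcal S}(P)$ is defined analogously with the $v_i$. By \eqref{eq:block-separable-weights}, $D_{\mathcal T}(P)$ depends only on the joint law of $(\boldsymbol x_{\mathcal T},\hat{\boldsymbol x}_{\mathcal T})$, and $D_{\mathcal S}(P)$ depends only on the joint law of $(\boldsymbol x_{\mathcal S},\hat{\boldsymbol x}_{\mathcal S})$. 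Moreover $D_{W^{\mathrm{rel}}}=D_{\mathcal T}$. Since $f$ ignores $\mathcal S$, $\mathcal L_{\mathrm{task}}(P)$ likewise depends only on the law of $(\boldsymbol x_{\mathcal T},\hat{\boldsymbol x}_{\mathcal T})$.

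\textbf{Step 1: superadditivity of rate.} The one information-theoretic ingredient is that, when $\boldsymbol x_{\mathcal T}$ and $\boldsymbol x_{\mathcal S}$ are independent,
\[
I(\boldsymbol X;\hat{\boldsymbol X})\ge I(\boldsymbol x_{\mathcal T};\hat{\boldsymbol x}_{\mathcal T})+I(\boldsymbol x_{\mathcal S};\hat{\boldsymbol x}_{\mathcal S}).
\]
This follows from the chain rule $I(\boldsymbol X;\hat{\boldsymbol X})=I(\boldsymbol x_{\mathcal T};\hat{\boldsymbol X})+I(\boldsymbol x_{\mathcal S};\hat{\boldsymbol X}\mid\boldsymbol x_{\mathcal T})$. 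Independence gives $I(\boldsymbol x_{\mathcal S};\hat{\boldsymbol X}\mid\boldsymbol x_{\mathcal T})=I(\boldsymbol x_{\mathcal S};\hat{\boldsymbol X},\boldsymbol x_{\mathcal T})$, and then monotonicity of mutual information finishes it. A consequence is that a product channel $P_{\mathcal T}(\hat x_{\mathcal T}\mid x_{\mathcal T})\,Q(\hat x_{\mathcal S})$, with $\hat{\boldsymbol x}_{\mathcal S}$ independent of $\boldsymbol X$, has rate exactly $I(\boldsymbol x_{\mathcal T};\hat{\boldsymbol x}_{\mathcal T})$.

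\textbf{Step 2: a $D_W$-minimizer also minimizes $D_{\mathcal T}$.} Fix a minimizer $Q$ of $D_W$ over $\mathcal P(R)$. By task consistency of $W$ and Proposition~\ref{my_prop..}, which uses the strict decrease of $L^\star$, $Q$ satisfies \eqref{eq:all-rate-is-relevant}, so $I_Q(\boldsymbol x_{\mathcal T};\hat{\boldsymbol x}_{\mathcal T})=R$. Step 1 then forces $I_Q(\boldsymbol x_{\mathcal S};\hat{\boldsymbol x}_{\mathcal S})=0$, so $(\boldsymbol x_{\mathcal S},\hat{\boldsymbol x}_{\mathcal S})$ has product law under $Q$. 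Let $P^\star$ be any minimizer of $D_{\mathcal T}$ over $\mathcal P(R)$. Form the channel
\[
P'\triangleq P^\star_{\mathcal T}(\hat x_{\mathcal T}\mid x_{\mathcal T})\,Q_{\hat{\mathcal S}}(\hat x_{\mathcal S}),
\]
where $Q_{\hat{\mathcal S}}$ is the marginal law of $\hat{\boldsymbol x}_{\mathcal S}$ under $Q$. The following properties hold:
\begin{itemize}
\item $P'\in\mathcal P(R)$, since by Step 1 its rate is $I_{P^\star}(\boldsymbol x_{\mathcal T};\hat{\boldsymbol x}_{\mathcal T})\le R$.
\item $D_{\mathcal T}(P')=D_{\mathcal T}(P^\star)$.
\item $D_{\mathcal S}(P')=D_{\mathcal S}(Q)$, because both induce the same product law on $(\boldsymbol x_{\mathcal S},\hat{\boldsymbol x}_{\mathcal S})$.
\end{itemize}
Optimality of $Q$ gives $D_{\mathcal T}(Q)+D_{\mathcal S}(Q)\le D_{\mathcal T}(P^\star)+D_{\mathcal S}(Q)$, hence $D_{\mathcal T}(Q)\le D_{\mathcal T}(P^\star)$. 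So $Q$ also minimizes $D_{\mathcal T}$, and $D_{\mathcal T}(P^\star)=D_{\mathcal T}(Q)$.

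\textbf{Step 3: conclude task consistency of $W^{\mathrm{rel}}$.} From Step 2, $D_W(P')=D_{\mathcal T}(Q)+D_{\mathcal S}(Q)=\min_{\mathcal P(R)}D_W$, so $P'\in\argmin D_W$. Task consistency of $W$ then makes $P'$ task-optimal. Since $P'$ and $P^\star$ share the law of $(\boldsymbol x_{\mathcal T},\hat{\boldsymbol x}_{\mathcal T})$, we get $\mathcal L_{\mathrm{task}}(P^\star)=\mathcal L_{\mathrm{task}}(P')$. Thus every minimizer of $D_{W^{\mathrm{rel}}}$ is task-optimal, which proves that $W^{\mathrm{rel}}$ is task-consistent. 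The existence claim is then immediate.

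\textbf{Uniqueness clause.} If task-consistent maps are unique up to positive scaling, then any task-consistent $W'$ satisfies $W'=cW^{\mathrm{rel}}$ $P_{\boldsymbol X}$-a.s. for some $c>0$, so $W'$ vanishes on $\mathcal S$ almost surely.

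\textbf{Main obstacle.} The delicate step is Step 2. It has to use Proposition~\ref{my_prop..} together with the superadditivity bound to pin the $\mathcal S$-part of $Q$ at zero rate. Only then can $Q$'s $\mathcal S$-part be spliced onto an arbitrary $D_{\mathcal T}$-minimizer without exceeding the rate budget or changing $D_{\mathcal S}$. I would check carefully that $D_{\mathcal S}$ depends only on the $(\boldsymbol x_{\mathcal S},\hat{\boldsymbol x}_{\mathcal S})$ marginal; this relies on $v_i$ depending on $\boldsymbol x_{\mathcal S}$ alone. I would also check that replacing $\hat{\boldsymbol x}_{\mathcal S}$ by an independent copy with the same marginal preserves that law, which holds precisely because it is already a product law under $Q$.
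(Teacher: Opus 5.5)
Your proof is correct and follows the paper's strategy. Both use superadditivity of mutual information under independent blocks, then Proposition~\ref{my_prop..} to force zero rate on the $\mathcal S$-block of a $D_W$-minimizer, then a splice of that zero-rate $\mathcal S$-part onto the $\mathcal T$-block of a $D_{W^{\mathrm{rel}}}$-minimizer, and finally the fact that the task loss depends only on the law of $(\boldsymbol x_{\mathcal T},\hat{\boldsymbol x}_{\mathcal T})$; your version is somewhat leaner, since you apply Proposition~\ref{my_prop..} directly to $Q$ instead of first passing to the product channel $\overline{P}^\star$ and an auxiliary relevant-block minimization, and your uniqueness step via $W'=cW^{\mathrm{rel}}$ makes the paper's check that $W^{\mathrm{rel}}$ is not almost surely zero unnecessary.
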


\section{Discussion and Conclusion}\label{sec:conclusion}
UniTAC shows that a \emph{single} image codec can span the full range from universal to task-specialized operation, re-targeted at runtime by swapping the injected weight vector $W$ rather than retraining a codec per task. The same backbone matches the non-semantic codec on overall PSNR, substantially improves task-region semantic PSNR and downstream accuracy at equal or lower rate, and approaches the per-task ``task-based'' upper baseline. Two questions remain open. The first is the \emph{universality gap}: how closely a weight-conditioned compressor can approach a fully retrained task-specific one across a rich family of tasks. The second is the quality of the importance map, the operative link between task and rate allocation, since the codec can allocate bits only as well as the supplied weights indicate. We estimate this map by integrated gradients with a fixed baseline; comparing alternative attribution methods and richer models that capture the cross-component interactions discarded by the diagonal approximation is a promising avenue for closing the remaining gap to task-specific codecs.


\appendices
\section{}\label{app:proofs}

\subsection{Proof of Proposition \ref{prop:multioutput}}

Since $f(X) = S X$, \eqref{approxexactee} is exact and so is \eqref{approx1st}. Hence, in \eqref{firs3jh}, we have $\widetilde{\mathcal{L}}_{\text{task}}(P) = \mathcal{L}_{\text{task}}(P)$. The Jacobian matrix of $f$ is $J_f(X) = S$, so that $G(X) =  J_f(X)^T J_f(X) = S^T S$. Thus, $g_{ij}(X) = \langle s_{:,i},s_{:,j}\rangle$. Then since the second term in \eqref{firs3jh} vanishes for every channel in $\mathcal{P}_{\mathrm{reg}}(R)$, we have equality $\mathcal{L}_{\text{task}}(P) = D_W(P)$, where the weights are the diagonal components of $G$, namely $w_i(X) = \langle s_{:,i},s_{:,i}\rangle = \| s_{:,i} \|^2$.

\subsection{Proof of Proposition \ref{delta_prop}}

\begin{proof}
Let $P_W^\star \in
\argmin_{P\in {\mathcal{P}'}(R)} D_W(P)$
be any minimizer of the separable weighted distortion, and let $P_{\text{task}}^\star \in
\argmin_{P\in {\mathcal{P}'}(R)}
\mathcal{L}_{\text{task}}(P)$ be a task-loss minimizer. We have
\begin{align*}
\mathcal{L}_{\text{task}}(P)
&=
\sum_i
\|s_{:,i}\|_2^2
\mathbb{E}_P[\boldsymbol{e}_i^2]
+
\sum_{i\neq j}
\left\langle s_{:,i},s_{:,j}\right\rangle
\mathbb{E}_P[
\boldsymbol{e}_i\boldsymbol{e}_j
] \\
&=
D_W(P)+C_S(P),
\end{align*}
where $w_i=\|s_{:,i}\|_2^2$. By optimality of $P_W^\star$ for $D_W$,
\begin{align*}
D_W(P_W^\star)
\leq
D_W(P_{\text{task}}^\star).
\end{align*}
Therefore,
\begin{align*}
\mathcal{L}_{\text{task}}(P_W^\star)
&=
D_W(P_W^\star)+C_S(P_W^\star) \\
&\leq
D_W(P_{\text{task}}^\star)+C_S(P_W^\star) \\
&=
\mathcal{L}_{\text{task}}(P_{\text{task}}^\star)
+
C_S(P_W^\star)
-
C_S(P_{\text{task}}^\star) \\
&\leq
\mathcal{L}_{\text{task}}(P_{\text{task}}^\star)
+
\left|
C_S(P_W^\star)
-
C_S(P_{\text{task}}^\star)
\right| \\
&\leq
\mathcal{L}_{\text{task}}(P_{\text{task}}^\star)
+\delta_S(R),
\end{align*}
where the last inequality follows from the definition of
$\delta_S(R)$. Since $P_W^\star$ was an arbitrary minimizer of $D_W$ over
${\mathcal{P}'}(R)$, every such minimizer satisfies
\[
\mathcal{L}_{\text{task}}(P_W^\star)
\leq
\min_{P\in {\mathcal{P}'}(R)}
\mathcal{L}_{\text{task}}(P)
+\delta_S(R).
\]
{Hence, by Definition~\ref{def:approx_task_consistency} extended to $\mathcal{P}'(R)$}, the separable weighted distortion with
$w_i=\|s_{:,i}\|_2^2$ is $\delta_S(R)$-task-consistent at rate $R${, with the minimizations taken over $\mathcal{P}'(R)$.}
\end{proof}

\subsection{Proof of Proposition~\ref{prop:symmetry}}
\begin{proof}

Fix a permutation $\pi$ and a channel $P = P(\hat{X}{\mid}X)$. Let
$(\boldsymbol{X},\hat{\boldsymbol{X}})$ have joint distribution
$P_{\boldsymbol{X}}\times P$. Write $\mathbb{E}_P$ and $I_P$ for expectation
and mutual information under this joint distribution. We also define the channel $P^\pi = \pi P$, where $P^\pi(\hat{X}|X) = P(\pi^{-1}\hat{X} | \pi^{-1} X )$. By exchangeability of $\boldsymbol{X}$, the joint
distribution $P_{\boldsymbol{X}}\times(\pi P)$ is the same as the
distribution of
$(\pi\boldsymbol{X},\pi\hat{\boldsymbol{X}})$ under
$P_{\boldsymbol{X}}\times P$. Since permutations are bijections, mutual
information is invariant under the corresponding transformations. Hence,
\begin{align*}
    I_{P^\pi}
    \left(
        \boldsymbol{X};
        \hat{\boldsymbol{X}}
    \right)
    &=
    I_P
    \left(
        \pi\boldsymbol{X};
        \pi\hat{\boldsymbol{X}}
    \right) =
    I_P
    \left(
        \boldsymbol{X};
        \hat{\boldsymbol{X}}
    \right).
\end{align*}
This proves that $P \in \mathcal{P}(R) \iff \pi P \in \mathcal{P}(R)$. Using the same joint-distribution identity and the permutation invariance of the task function
$f$, we obtain $\mathcal{L}_{\text{task}}(\pi P) = \mathcal{L}_{\text{task}}(P)$. Similarly, it can be checked that $D_{W}(P) = D_{W^\pi}(\pi P)$.

The mapping $  P
    \mapsto
    \pi P$
is a bijection on the set of channels, with inverse
$P\mapsto\pi^{-1}P$. It is also a
bijection from $\mathcal{P}(R)$ onto itself. Let $\Pi_W^\star$ and $\Pi_f^\star$ denote the argmin sets in the LHS and RHS of \eqref{eq:task_consistency}, respectively. Then we have shown that 
\begin{align*}
   \Pi_f^\star &= \pi \Pi_f^\star \coloneqq  \left \{ \pi P :P \in \Pi_f^\star  \right \}
    ,
    \\
    \Pi_{W^\pi}^\star &= \pi \Pi_W^\star \coloneqq \left \{ \pi P :P \in \Pi_W^\star  \right \}.
\end{align*} 
Finally, suppose that $W$ is task-consistent. Then
$\Pi_W^\star\subseteq\Pi_f^\star$, and hence $\Pi_{W^\pi}^\star
    =
    \pi\Pi_W^\star
    \subseteq
    \pi\Pi_f^\star
    =
    \Pi_f^\star.$ Thus $W^\pi$ is also task-consistent.


Next, we prove the second part of the proposition, which additionally assumes uniqueness up to positive scaling. Consider a task-consistent weight map $W$. If $W(\boldsymbol{X}) = \boldsymbol{0}$ almost surely,
then every channel in $\mathcal{P}(R)$ would minimize $D_W(P)$, and hence every channel would also minimize $\mathcal{L}_{\text{task}}(P)$. Consequently, every weight map would be task-consistent, contradicting uniqueness up to positive scaling. Hence, a task-consistent weight map $W(\boldsymbol{X})$ cannot be almost surely equal to zero. 

Now, by the assumed uniqueness up to positive scaling, there exists a constant $c_\pi>0$ such that
\begin{align}
    W^\pi(\boldsymbol{X})
    =
    c_\pi W(\boldsymbol{X})
    \quad\text{almost surely}.
    \label{eq:wpi-proportional}
\end{align}
Recall the definition $ W^\pi(X) \coloneqq
\pi\!\left(W(\pi^{-1}X)\right).$ If we apply the permutation map $j$ times, this will be denoted as $\pi^j$, and $j$ inverse applications will be denoted as $\pi^{-j}$. Then we have $$ W^{\pi^j}(X) \coloneqq
\pi^j\!\left(W(\pi^{-j}X)\right).$$
Since $\pi$ is a permutation of the finite set $[n]$, there exists an
integer $k\geq 1$ such that $\pi^k$ is the identity permutation. Staring from \eqref{eq:wpi-proportional}, it can be shown by induction and the exchangeability assumption that 
\begin{align}
    W^{\pi^j}(\boldsymbol{X})
    =
    c_\pi^j W(\boldsymbol{X})
    \quad\text{almost surely}
    \label{eq:iterated-wpi}
\end{align}
for every $j=1,\ldots,k$. Taking $j=k$ in \eqref{eq:iterated-wpi} and using that $\pi^k$ is the
identity permutation, we obtain
\begin{align*}
    W(\boldsymbol{X})
    =
    c_\pi^k W(\boldsymbol{X})
\end{align*}
almost surely. Since $W$ is not almost surely equal to zero, this implies
$c_\pi^k=1$. Since $c_\pi>0$, we must have $c_\pi=1$. Therefore, $W^\pi(\boldsymbol{X})
    =
    W(\boldsymbol{X})$
almost surely. Since there are only finitely many permutations, this
equality holds simultaneously for every permutation. Thus, $W$ is
$P_{\boldsymbol{X}}$-almost surely permutation equivariant.

To prove the last statement in the result, suppose there exists a task-consistent weight map $U$  such that $U(X) = U \in \mathbb{R}^n_+$ for $P_{\boldsymbol{X}}$-a.e. $X$. By permutation equivariance that we just proved, we have $U = U(X) = \pi (U(\pi^{-1} X)) = \pi U $ for every $\pi$. Hence, it must be that $U = a \mathbf{1}_n$ for some constant $a > 0$. Then by the uniqueness-up-to-scaling assumption, every other task-consistent weight map $W $ must satisfy $W(X) = c_W U(X) = c_W a \mathbf{1}_n$ for some constant $c_W > 0$. \end{proof}

\subsection{Proof of Proposition \ref{my_prop..}}

Fix a channel
$P=P({\hat{X}}\mid {X})$. Let
$(\boldsymbol{X},\boldsymbol{\hat{X}})$ have joint distribution
$P_{\boldsymbol{X}}\times P$, and write $\mathbb{E}_P$ and $I_P$ for
expectation and mutual information under this joint distribution. Since
the task ignores $\mathcal{S}$, there exists a function $g$ such that $f(X)
    =
    g(x_{\mathcal{T}})$. By construction of $P^{\mathrm{red}}$, the joint distribution of
$(\boldsymbol{x}_{\mathcal{T}},
\boldsymbol{\hat{x}}_{\mathcal{T}})$ is the same under
$P_{\boldsymbol{X}}\times P^{\mathrm{red}}$ and
$P_{\boldsymbol{X}}\times P$. Therefore, $\mathcal{L}_{\mathrm{task}}(P^{\mathrm{red}})
    = \mathcal{L}_{\mathrm{task}}(P).$

Under $P_{\boldsymbol{X}}\times P^{\mathrm{red}}$,
$\boldsymbol{\hat{x}}_{\mathcal{S}}$ is constant, and
$\boldsymbol{\hat{x}}_{\mathcal{T}}$ is conditionally independent of
$\boldsymbol{x}_{\mathcal{S}}$ given
$\boldsymbol{x}_{\mathcal{T}}$. Hence,
\begin{align}
    I_{P^{\mathrm{red}}}
    (\boldsymbol{X};\boldsymbol{\hat{X}})
    &=
    I_{P^{\mathrm{red}}}
    (\boldsymbol{x}_{\mathcal{T}},
    \boldsymbol{x}_{\mathcal{S}};
    \boldsymbol{\hat{x}}_{\mathcal{T}}) \notag
    \\
    &=
    I_{P^{\mathrm{red}}}
    (\boldsymbol{x}_{\mathcal{T}};
    \boldsymbol{\hat{x}}_{\mathcal{T}}) \notag
    \\
    &=
    I_P
    (\boldsymbol{x}_{\mathcal{T}};
    \boldsymbol{\hat{x}}_{\mathcal{T}}) \notag
    \\
    &\leq
    I_P
    (\boldsymbol{X};\boldsymbol{\hat{X}}). \notag 
\end{align}
This proves \eqref{eq:reduced-same-task-loss} and
\eqref{eq:reduced-rate}. If $P$ is task-optimal in
$\mathcal{P}(R)$, then $P^{\mathrm{red}}$ is feasible and has the same
task loss, so it is also task-optimal. This proves the first part of the
proposition.

Next, suppose that $L^\star$ is strictly decreasing at $R$, and fix
\begin{align*}
    P^\star
    \in
    \argmin_{P\in\mathcal{P}(R)}
    \mathcal{L}_{\mathrm{task}}(P).
\end{align*}
Let $r
    \coloneqq
    I_{P^\star}
    (\boldsymbol{x}_{\mathcal{T}};
    \boldsymbol{\hat{x}}_{\mathcal{T}}).$
By the first part, the reduced channel
$(P^\star)^{\mathrm{red}}$ has rate $r$ and task loss
$L^\star(R)$. Suppose, for contradiction, that $r<R$. Choose $r'$ such
that $r<r'<R.$
Then $(P^\star)^{\mathrm{red}}\in\mathcal{P}(r')$, and hence
\begin{align*}
    L^\star(r')
    &\leq
    \mathcal{L}_{\mathrm{task}}
    ((P^\star)^{\mathrm{red}})
    \\
    &=
    L^\star(R).
\end{align*}
On the other hand, since $r'<R$ and $L^\star$ is strictly decreasing at
$R$, we have $  L^\star(r')
    >
    L^\star(R),$
which is a contradiction. Therefore, $I_{P^\star}
    (\boldsymbol{x}_{\mathcal{T}};
    \boldsymbol{\hat{x}}_{\mathcal{T}})
    =
    R.$
Since
\begin{align*}
    I_{P^\star}
    (\boldsymbol{x}_{\mathcal{T}};
    \boldsymbol{\hat{x}}_{\mathcal{T}})
    &\leq
    I_{P^\star}
    (\boldsymbol{X};
    \boldsymbol{\hat{x}}_{\mathcal{T}})
    \\
    &\leq
    I_{P^\star}
    (\boldsymbol{X};\boldsymbol{\hat{X}})
    \leq R,
\end{align*}
all three quantities are equal to $R$. Finally, the chain rule gives
\begin{align}
    &I_{P^\star}
    (\boldsymbol{X};\boldsymbol{\hat{X}})
    -
    I_{P^\star}
    (\boldsymbol{x}_{\mathcal{T}};
    \boldsymbol{\hat{x}}_{\mathcal{T}})
    \nonumber
    \\
    &\qquad=
    I_{P^\star}
    (\boldsymbol{x}_{\mathcal{S}};
    \boldsymbol{\hat{x}}_{\mathcal{T}}
    \mid\boldsymbol{x}_{\mathcal{T}})
    +
    I_{P^\star}
    (\boldsymbol{X};
    \boldsymbol{\hat{x}}_{\mathcal{S}}
    \mid\boldsymbol{\hat{x}}_{\mathcal{T}}).
    \label{eq:irrelevant-rate-decomposition}
\end{align}
The left-hand side of
\eqref{eq:irrelevant-rate-decomposition} is zero, and both terms on the
right-hand side are nonnegative. Therefore, both terms are zero, which
proves
\eqref{eq:all-rate-is-relevant}--\eqref{eq:no-incremental-ignored-rate}.
The last statement follows immediately from task consistency.

\subsection{Proof of Proposition \ref{zero_prop}}

Define
\begin{align*}
    D_{\mathcal{T}}(P)
    &\coloneqq
    \mathbb{E}_P
    \left[
        \sum_{i\in\mathcal{T}}
        u_i(\boldsymbol{x}_{\mathcal{T}})
        \left\|
            \boldsymbol{x}_i-\boldsymbol{\hat{x}}_i
        \right\|_2^2
    \right],
    \\
    D_{\mathcal{S}}(P)
    &\coloneqq
    \mathbb{E}_P
    \left[
        \sum_{i\in\mathcal{S}}
        v_i(\boldsymbol{x}_{\mathcal{S}})
        \left\|
            \boldsymbol{x}_i-\boldsymbol{\hat{x}}_i
        \right\|_2^2
    \right].
\end{align*}
Then under the block-separability assumption \eqref{eq:block-separable-weights}, we can write
\begin{align}
    D_W(P)
    &=
    D_{\mathcal{T}}(P)+D_{\mathcal{S}}(P), \label{givendwpn}
    \\
    D_{W^{\mathrm{rel}}}(P)
    &=
    D_{\mathcal{T}}(P). \notag 
\end{align}
We first show that any channel can be replaced by a product of its
marginal block channels without increasing its rate or changing 
the value of $D_W$ given in \eqref{givendwpn}. Fix a channel
$P=P(\hat{X}\mid X)$, and let
$P_{\mathcal{T}}
    =
    P(\hat{x}_{\mathcal{T}}
    \mid x_{\mathcal{T}})$ and $P_{\mathcal{S}}
    =
    P(\hat{x}_{\mathcal{S}}
    \mid x_{\mathcal{S}})$
denote the two conditional laws induced by
$P_{\boldsymbol{X}}\times P$. Define the product channel
$\overline{P}=P_{\mathcal{T}}P_{\mathcal{S}}$ by
\begin{align}
    \overline{P}
    ({\hat{x}}_{\mathcal{T}},
    {\hat{x}}_{\mathcal{S}}
    \mid
    {x}_{\mathcal{T}},
    {x}_{\mathcal{S}})
    \coloneqq
    P_{\mathcal{T}}
    ({\hat{x}}_{\mathcal{T}}
    \mid {x}_{\mathcal{T}})
    P_{\mathcal{S}}
    ({\hat{x}}_{\mathcal{S}}
    \mid {x}_{\mathcal{S}}).
    \label{eq:block-product-channel}
\end{align}
By the chain rule,
\begin{align*}
    I_P
    (\boldsymbol{X};\boldsymbol{\hat{X}})
    &=
    I_P
    (\boldsymbol{x}_{\mathcal{T}};
    \boldsymbol{\hat{X}})
    +
    I_P
    (\boldsymbol{x}_{\mathcal{S}};
    \boldsymbol{\hat{X}}
    \mid\boldsymbol{x}_{\mathcal{T}})
    \\
    &\geq
    I_P
    (\boldsymbol{x}_{\mathcal{T}};
    \boldsymbol{\hat{x}}_{\mathcal{T}})
    +
    I_P
    (\boldsymbol{x}_{\mathcal{S}};
    \boldsymbol{\hat{X}}
    \mid\boldsymbol{x}_{\mathcal{T}}).
\end{align*}
Since
$\boldsymbol{x}_{\mathcal{T}}$ and
$\boldsymbol{x}_{\mathcal{S}}$ are independent,
\begin{align*}
    I_P
    (\boldsymbol{x}_{\mathcal{S}};
    \boldsymbol{\hat{X}}
    \mid\boldsymbol{x}_{\mathcal{T}})
    &=
    I_P
    (\boldsymbol{x}_{\mathcal{S}};
    \boldsymbol{\hat{X}},
    \boldsymbol{x}_{\mathcal{T}})
    \\
    &\geq
    I_P
    (\boldsymbol{x}_{\mathcal{S}};
    \boldsymbol{\hat{x}}_{\mathcal{S}}).
\end{align*}
Therefore,
\begin{align}
    I_P
    (\boldsymbol{X};\boldsymbol{\hat{X}})
    \geq
    I_P
    (\boldsymbol{x}_{\mathcal{T}};
    \boldsymbol{\hat{x}}_{\mathcal{T}})
    +
    I_P
    (\boldsymbol{x}_{\mathcal{S}};
    \boldsymbol{\hat{x}}_{\mathcal{S}}).
    \label{eq:mutual-information-tensorization}
\end{align}
Under the product channel $\overline{P}$, independence of the two source
blocks gives
\begin{align}
    I_{\overline{P}}
    (\boldsymbol{X};\boldsymbol{\hat{X}})
    &=
    I_P
    (\boldsymbol{x}_{\mathcal{T}};
    \boldsymbol{\hat{x}}_{\mathcal{T}})
    +
    I_P
    (\boldsymbol{x}_{\mathcal{S}};
    \boldsymbol{\hat{x}}_{\mathcal{S}}) \notag 
    \\
    &\leq
    I_P
    (\boldsymbol{X};\boldsymbol{\hat{X}}). \label{precro}
\end{align}
Moreover, the joint distribution of
$(\boldsymbol{x}_{\mathcal{T}},
\boldsymbol{\hat{x}}_{\mathcal{T}})$ is the same under $P_{\boldsymbol{X}} \times P$ and $P_{\boldsymbol{X}} \times \overline{P}$. The same applies to the joint distribution of
$(\boldsymbol{x}_{\mathcal{S}},
\boldsymbol{\hat{x}}_{\mathcal{S}})$. Hence,
\begin{align}
\begin{split}
    D_{\mathcal{T}}(\overline{P})
    &=
    D_{\mathcal{T}}(P),
    \\
    D_{\mathcal{S}}(\overline{P})
    &=
    D_{\mathcal{S}}(P),
\end{split}\label{splitsw}
\end{align}
and thus $  D_W(\overline{P})
    =
    D_W(P).$

Now fix
\begin{align*}
    P^\star
    \in
    \argmin_{P\in\mathcal{P}(R)}
    D_W(P).
\end{align*}
Let $P_{\mathcal{T}}^\star$ and
$P_{\mathcal{S}}^\star$ denote its induced marginal block channels, and
let $\overline{P}^\star
    \coloneqq
    P_{\mathcal{T}}^\star P_{\mathcal{S}}^\star.$ Inequality \eqref{precro} shows that $\overline{P}^\star$ is feasible, and \eqref{splitsw} shows that $\overline{P}^\star$
has the same weighted distortion as $P^\star$. Therefore,
$\overline{P}^\star$ also minimizes $D_W$ over $\mathcal{P}(R)$. Since
$W$ is task-consistent, $\overline{P}^\star$ minimizes the task loss.
It follows from \eqref{eq:all-rate-is-relevant} that
\begin{align}
    I_{P_{\mathcal{T}}^\star}
    (\boldsymbol{x}_{\mathcal{T}};
    \boldsymbol{\hat{x}}_{\mathcal{T}})
    =
    R.
    \label{eq:relevant-block-uses-full-rate}
\end{align}
On the other hand, the rate of the product channel satisfies
\begin{align*}
    I_{\overline{P}^\star}
    (\boldsymbol{X};\boldsymbol{\hat{X}})
    &=
    I_{P_{\mathcal{T}}^\star}
    (\boldsymbol{x}_{\mathcal{T}};
    \boldsymbol{\hat{x}}_{\mathcal{T}})
    +
    I_{P_{\mathcal{S}}^\star}
    (\boldsymbol{x}_{\mathcal{S}};
    \boldsymbol{\hat{x}}_{\mathcal{S}})
    \\
    &\leq R.
\end{align*}
Together with \eqref{eq:relevant-block-uses-full-rate}, this implies
\begin{align}
    I_{P_{\mathcal{S}}^\star}
    (\boldsymbol{x}_{\mathcal{S}};
    \boldsymbol{\hat{x}}_{\mathcal{S}})
    =
    0.
    \label{eq:ignored-block-zero-rate}
\end{align}
We next show that $P_{\mathcal{T}}^\star$ minimizes the relevant-block
distortion $D_{\mathcal{T}}(\cdot)$ among all relevant-block channels of rate at most $R$.
Suppose, for contradiction, that there exists a channel $Q_{\mathcal{T}}
    =
    Q_{\mathcal{T}}
    (\boldsymbol{\hat{x}}_{\mathcal{T}}
    \mid\boldsymbol{x}_{\mathcal{T}})$
such that
\begin{align*}
    I_{Q_{\mathcal{T}}}
    (\boldsymbol{x}_{\mathcal{T}};
    \boldsymbol{\hat{x}}_{\mathcal{T}})
    &\leq R,
    \\
    D_{\mathcal{T}}(Q_{\mathcal{T}})
    &<
    D_{\mathcal{T}}(P_{\mathcal{T}}^\star).
\end{align*}
By \eqref{eq:ignored-block-zero-rate}, the product channel
$Q_{\mathcal{T}}P_{\mathcal{S}}^\star$ has rate at most $R$. Its
weighted distortion satisfies
\begin{align*}
    D_W
    (Q_{\mathcal{T}}P_{\mathcal{S}}^\star)
    &=
    D_{\mathcal{T}}(Q_{\mathcal{T}})
    +
    D_{\mathcal{S}}(P_{\mathcal{S}}^\star)
    \\
    &<
    D_{\mathcal{T}}(P_{\mathcal{T}}^\star)
    +
    D_{\mathcal{S}}(P_{\mathcal{S}}^\star)
    \\
    &=
    D_W(\overline{P}^\star),
\end{align*}
contradicting the optimality of $\overline{P}^\star$. Thus,
$P_{\mathcal{T}}^\star$ minimizes $D_{\mathcal{T}}$ among all
relevant-block channels of rate at most $R$.

Next, let $Q
    \in
    \argmin_{P\in\mathcal{P}(R)}
    D_{W^{\mathrm{rel}}}(P),$
and let $Q_{\mathcal{T}}
    =
    Q(\boldsymbol{\hat{x}}_{\mathcal{T}}
    \mid\boldsymbol{x}_{\mathcal{T}})$
denote the relevant-block channel induced by
$P_{\boldsymbol{X}}\times Q$. By the data-processing inequality,
\begin{align*}
    I_{Q_{\mathcal{T}}}
    (\boldsymbol{x}_{\mathcal{T}};
    \boldsymbol{\hat{x}}_{\mathcal{T}})
    \leq
    I_Q
    (\boldsymbol{X};\boldsymbol{\hat{X}})
    \leq R.
\end{align*}
Moreover,
\begin{align*}
    D_{W^{\mathrm{rel}}}(Q)
    =
    D_{\mathcal{T}}(Q_{\mathcal{T}}).
\end{align*}
The channel $Q_{\mathcal{T}}$ must minimize the relevant-block
distortion among all relevant-block channels of rate at most $R$.
Indeed, otherwise there would exist a relevant-block channel with rate
at most $R$ and strictly smaller relevant-block distortion. Extending
that channel by a constant reconstruction on $\mathcal{S}$ would produce
a channel in $\mathcal{P}(R)$ with strictly smaller
$W^{\mathrm{rel}}$-weighted distortion than $Q$, which is a
contradiction. Since both $Q_{\mathcal{T}}$ and
$P_{\mathcal{T}}^\star$ minimize the same relevant-block distortion,
\begin{align}
    D_{\mathcal{T}}(Q_{\mathcal{T}})
    =
    D_{\mathcal{T}}(P_{\mathcal{T}}^\star).
    \label{eq:equal-relevant-block-distortion}
\end{align}
Consider the product channel
$Q_{\mathcal{T}}P_{\mathcal{S}}^\star$. By
\eqref{eq:ignored-block-zero-rate}, it belongs to
$\mathcal{P}(R)$. Using
\eqref{eq:equal-relevant-block-distortion}, we obtain
\begin{align*}
    D_W
    (Q_{\mathcal{T}}P_{\mathcal{S}}^\star)
    &=
    D_{\mathcal{T}}(Q_{\mathcal{T}})
    +
    D_{\mathcal{S}}(P_{\mathcal{S}}^\star)
    \\
    &=
    D_{\mathcal{T}}(P_{\mathcal{T}}^\star)
    +
    D_{\mathcal{S}}(P_{\mathcal{S}}^\star)
    \\
    &=
    D_W(\overline{P}^\star).
\end{align*}
Thus, $Q_{\mathcal{T}}P_{\mathcal{S}}^\star$ minimizes $D_W$ over
$\mathcal{P}(R)$. Since $W$ is task-consistent,
$Q_{\mathcal{T}}P_{\mathcal{S}}^\star$ also minimizes the task loss.

The channels $Q$ and
$Q_{\mathcal{T}}P_{\mathcal{S}}^\star$ induce the same joint
distribution of
$(\boldsymbol{x}_{\mathcal{T}},
\boldsymbol{\hat{x}}_{\mathcal{T}})$. Since the task ignores
$\mathcal{S}$, there exists a function $g$ such that $f({X})
    =
    g({x}_{\mathcal{T}}).$
Consequently,
\begin{align*}
    \mathcal{L}_{\mathrm{task}}(Q)
    &=
    \mathcal{L}_{\mathrm{task}}
    (Q_{\mathcal{T}}P_{\mathcal{S}}^\star)
    \\
    &=
    L^\star(R).
\end{align*}
Since this holds for every minimizer $Q$ of
$D_{W^{\mathrm{rel}}}$ over $\mathcal{P}(R)$,
$W^{\mathrm{rel}}$ is task-consistent.

It remains to prove the assertion under uniqueness up to positive
scaling. First, note that $W^{\mathrm{rel}}$ is not almost surely equal
to zero. To see this, choose $r'<R$ and let $P'
    \in
    \argmin_{P\in\mathcal{P}(r')}
    \mathcal{L}_{\mathrm{task}}(P).$
Since $L^\star$ is strictly decreasing at $R$,
\begin{align*}
    \mathcal{L}_{\mathrm{task}}(P')
    =
    L^\star(r')
    >
    L^\star(R).
\end{align*}
The channel $P'$ belongs to $\mathcal{P}(R)$ but does not minimize the
task loss over $\mathcal{P}(R)$. The identically zero weight map cannot
therefore be task-consistent, since every channel in $\mathcal{P}(R)$
minimizes its weighted distortion. Since $W^{\mathrm{rel}}$ is
task-consistent, it is not almost surely equal to zero.

Now suppose task-consistent weight maps are unique up to positive
scaling and $P_{\boldsymbol{X}}$-almost-sure equality. Since both $W$
and $W^{\mathrm{rel}}$ are task-consistent, there exists a constant
$c>0$ such that $ W^{\mathrm{rel}}(\boldsymbol{X})
    =
    cW(\boldsymbol{X})
    $ almost surely. 
For every $i\in\mathcal{S}$, the definition of
$W^{\mathrm{rel}}$ therefore gives
\begin{align*}
    0
    =
    w_i^{\mathrm{rel}}(\boldsymbol{X})
    =
    cw_i(\boldsymbol{X})
    \quad\text{almost surely}.
\end{align*}
Since $c>0$, we have $w_i(\boldsymbol{X})
    =
    0$ almost surely 
for every $i\in\mathcal{S}$. Every other task-consistent weight map is
a positive scalar multiple of $W$, and hence also vanishes on
$\mathcal{S}$ almost surely.


\bibliographystyle{IEEEtran}
\bibliography{references}

\end{document}